\newcommand{\argmin}{\mathrm{argmin}}
\newcommand{\poly}{\mathrm{poly}}
\newcommand{\E}{\mathbb{E}}
\newcommand{\mat}[1]{\mathbf{#1}}
\newcommand{\vect}[1]{\mathbf{#1}}
\newcommand{\norm}[1]{\left\|#1\right\|}
\newcommand{\abs}[1]{\left|#1\right|}
\newcommand{\expect}{\mathbb{E}}
\newcommand{\prob}{\mathbb{P}}
\newcommand{\inputdist}{\mathcal{Z}}
\newcommand{\vectorize}[1]{\text{vec}\left(#1\right)}
\newcommand{\relu}[1]{\sigma\left(#1\right)}
\newtheorem{thm}{Theorem}[section]
\newtheorem{lem}{Lemma}[section]
\newtheorem{asmp}{Assumption}[section]
\title{Improved Learning of One-hidden-layer Convolutional Neural Networks with Overlaps}
\author[1]{Simon S. Du}
\author[2]{Surbhi Goel}
\affil[1]{Machine Learning Department, Carnegie Mellon University}
\affil[2]{Department of Computer Science, University of Texas at Austin}
\begin{document}

\maketitle
\begin{abstract}
We propose a new algorithm to learn a one-hidden-layer convolutional neural network where both the convolutional weights and the outputs weights are parameters to be learned. Our algorithm works for a general class of (potentially overlapping) patches, including commonly used structures for computer vision tasks. Our algorithm draws ideas from (1) isotonic regression for learning neural networks and (2) landscape analysis of non-convex matrix factorization problems. We believe these findings may inspire further development in designing provable algorithms for learning neural networks and other complex models.

\end{abstract}

\section{Introduction}
\label{sec:intro}
%Background
Giving provably efficient algorithms for learning neural networks is a core challenge in machine learning theory.  Convolutional architectures have recently attracted much interest due to their many practical applications.  Recently \citet{brutzkus2017globally} showed that distribution-free learning of one simple non-overlapping convolutional filter is NP-hard.  A natural open question is whether we can design provably efficient algorithms to learn convolutional neural networks under mild assumptions. 

%Convolutional neural networks have achieved a remarkable impact on many applications such as computer vision, speech recognition and natural language processing.
%raise the problem
%Though neural networks are successful in practice, their theoretical properties are not yet well understood. 
%In the worst case, the theoretical results are mostly negative.
% \citet{blum1989training} showed learning a 3-node neural network is hard and 
%Recently \citet{brutzkus2017globally} showed learning a simple non-overlapping convolutional filter is hard.
%On the other hand, existing positive results on learning convolutional neural network often (1) require strong assumptions like the input distribution is standard Gaussian~\citep{zhong2017learning}; (2) only learn the filter weight~\citep{du2017convolutional,goel2018learning}; or (3) can only learn a neural network improperly\footnote{Improper learning means the algorithm returns a model that is \emph{not} in the target model class.}~\citep{zhang2016convexified,goel2017learning}
%A natural open question is whether we can design provably efficient algorithms to learn neural networks under milder assumptions.

%In this paper, we take an important step in this direction.
We consider a convolutional neural network of the form
\begin{align}
f\left(\vect{x},\vect{w},\vect{a}\right)= \sum_{j=1}^{k} a_j \relu{\vect{w}^\top\mat{P}_j\vect{x}} \label{eqn:cnn_structure}
\end{align}
where $\vect{w} \in \mathbb{R}^r$ is a shared convolutional filter, $\vect{a} \in \mathbb{R}^{k}$ is the second linear  layer and\begin{align*}
\mat{P}_j = [
\underbrace{\mat{0}}_{(j-1)s}  \underbrace{\mat{I}}_r  \underbrace{\mat{0}}_{d-(j-1)s+r}]\in \mathbb{R}^{r \times d}
\end{align*} 
selects the $((j-1)s+1)$-th to $((j-1)s+r)$-th coordinates of $\vect{x}$ with stride $s$ and $\sigma\left(\cdot\right)$ is the activation function.
%In contrast to \cite{du2017spurious,goel2018learning,brutzkus2017globally},  
Note here that both $\vect{w}$ and $\vect{a}$ are unknown vectors to be learned.
Further, in our model, there may be overlapping patches because the stride size $s$ may be smaller than the filter size $r$.

%  and we do not specify explicitly the input distribution, in contrast to \cite{du2017spurious,zhong2017learning}.
\paragraph{Our Contributions}
We give the {\emph first} efficient algorithm that can provably learn a convolutional neural network with \emph{two} unknown layers with commonly used overlapping patches. Our main result is the following theorem.
\begin{thm}[Main Theorem (Informal)]
	\label{thm:main_informal}
	Suppose $s \ge \lfloor \frac{r}{2} \rfloor + 1$ and the marginal distribution is symmetric and isotropic.  Then the convolutional neural network of the form~\eqref{eqn:cnn_structure} with piecewise linear activation functions is learnable in polynomial time.
\end{thm}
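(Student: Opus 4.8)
The plan is to decouple the two unknown layers. A piecewise-linear activation with a kink at the origin can be written as $\sigma(z)=\beta z+\gamma|z|$, so
\[
f(\inputvar,\vw,\va)=\beta\,\Big(\textstyle\sum_j a_j\mat{P}_j^\top\vw\Big)^{\!\top}\!\inputvar \;+\; g(\inputvar),\qquad g(\inputvar):=\gamma\sum_{j=1}^k a_j\,\big|\vw^\top\mat{P}_j\inputvar\big|.
\]
Since the marginal is symmetric ($\E[\inputvar]=\mathbf 0$) and isotropic ($\E[\inputvar\inputvar^\top]=\mat I$), the linear term is exactly the best linear predictor of $f$, because $\E[g(\inputvar)\inputvar]=\mathbf 0$ by symmetry; hence it is recoverable to arbitrary accuracy by ordinary least squares and can be subtracted off. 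It therefore suffices to learn $g$, i.e.\ to learn the convolutional network with activation $|\cdot|$; positive homogeneity of $|\cdot|$ will let us fold $\norm{\vw}$ into $\va$ at the end.

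The crux is to recover the filter direction $\bar{\vw}=\vw/\norm{\vw}$ from $g$. I would form the second-moment matrix $\mathbf M:=\E[g(\inputvar)\,\inputvar\inputvar^\top]$ and show, using symmetry, isotropy, and crucially the stride condition $s\ge\lfloor r/2\rfloor+1$ — which forces $\mat P_j$ and $\mat P_{j'}$ to be \emph{disjoint} whenever $|j-j'|\ge 2$ and to overlap in fewer than $r/2$ coordinates when $|j-j'|=1$ — that the off-diagonal part of $\mathbf M$ equals the off-diagonal part of
\[
c\sum_{j=1}^k a_j\,\mat P_j^\top\,\frac{\vw\vw^\top}{\norm{\vw}}\,\mat P_j
\]
for an explicit constant $c$ depending only on $\sigma$ and the one-dimensional marginal. (Establishing this moment identity for the assumed distribution class, where the relevant integrals are no longer Gaussian, is itself part of the work and is where ``piecewise-linear'' is used.) Thus $\mathbf M$ is a rank-structured, banded matrix whose only free parameters are $\bar{\vw}$ and $\va$. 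I would then recover them by minimizing the nonconvex objective $F(\mathbf u,\mathbf b)=\big\|\,\mathrm{offdiag}(\widehat{\mathbf M})-\mathrm{offdiag}\big(\sum_j b_j\mat P_j^\top\mathbf u\mathbf u^\top\mat P_j\big)\big\|_F^2$ over $\mathbf u\in\R^r,\ \mathbf b\in\R^k$, and arguing, via a landscape analysis in the spirit of benign-nonconvexity results for low-rank matrix factorization, that every second-order stationary point of $F$ is globally optimal; then (perturbed) gradient descent on the empirical $\widehat{\mathbf M}$ converges to $(\bar{\vw},\va)$ up to the estimation error in $\widehat{\mathbf M}$ and the unavoidable sign flip $\mathbf u\mapsto-\mathbf u$.

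Given (an approximation to) $\bar{\vw}$, the residual problem is one-dimensional per patch: by positive homogeneity $g(\inputvar)=\sum_j (\gamma\norm{\vw}a_j)\,\big|\bar{\vw}^\top\mat P_j\inputvar\big|$, a generalized-linear / isotonic model in the feature vector $(\bar{\vw}^\top\mat P_j\inputvar)_{j=1}^k$ with unknown coefficients $\gamma\norm{\vw}\va$ and (if $\sigma$ is not assumed known) an unknown monotone link. Here I would invoke the isotonic-regression/GLMtron machinery for learning neural networks to fit the coefficients (and, if needed, the link) to accuracy $\epsilon$; in the realizable, known-$\sigma$ case this collapses to a well-conditioned least-squares fit. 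Reassembling the linear term, the recovered filter, and the recovered output weights yields $\hat f$ with $\E[(\hat f(\inputvar)-f(\inputvar))^2]\le\epsilon$, and standard uniform-convergence/perturbation arguments control the sample size at $\poly(d,k,r,1/\epsilon)$.

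I expect the main obstacle to be the landscape analysis of the \emph{structured} factorization $F$. The implicit factor's columns are tied together — they are all translates $\mat P_j^\top\mathbf u$ of one shared vector — and the scalars $b_j=\gamma\norm{\vw}a_j$ may have mixed signs, so $\mathbf M$ is in general indefinite; neither issue is covered by off-the-shelf matrix-factorization results, and resolving them is precisely where the stride hypothesis does its work: the near-orthogonality of the $\mat P_j$'s that it guarantees plays the role of an incoherence / restricted-isometry condition ruling out spurious stationary points, and it must be quantitatively strong enough that the benign landscape survives replacing $\mathbf M$ by $\widehat{\mathbf M}$. A secondary nuisance is pinning down the global sign of $\vw$ and the relative signs of the $a_j$'s, which I would settle using the patch-overlap (off-diagonal) blocks of $\mathbf M$, where the sign of $a_j$ is exposed.
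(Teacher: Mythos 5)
Your route is genuinely different from the paper's (which first learns $\vect{a}^*$ and the \emph{non-overlapping} part of $\vect{w}^*$ jointly by reducing to standard rank-one matrix factorization, then runs Convotron with the second layer fixed), but it has a gap that I believe is fatal under the stated hypotheses. Your identification step rests on the moment identity that the off-diagonal of $\mathbf{M}=\E[g(\vect{x})\,\vect{x}\vect{x}^\top]$ is proportional to the off-diagonal of $\sum_j a_j \mat{P}_j^\top \vect{w}\vect{w}^\top \mat{P}_j$. That kind of identity is a Gaussian (or at least rotation-invariant / controlled-fourth-moment) fact; the theorem only grants symmetry, identity covariance, and boundedness. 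The reason piecewise-linearity plus symmetry is so powerful in this setting (Lemma~\ref{lem:switch}) is that $\sigma$ can be linearized when correlated against an \emph{odd} test function: averaging $\vect{x}$ and $-\vect{x}$ turns $\sigma(\vect{a}^\top\vect{x})(\vect{b}^\top\vect{x})$ into $\tfrac{1+\alpha}{2}(\vect{a}^\top\vect{x})(\vect{b}^\top\vect{x})$. Your test function $\vect{x}\vect{x}^\top$ is even, and $g$ is even, so symmetry gives you nothing; $\E[\,|\vect{w}^\top\mat{P}_j\vect{x}|\,x_a x_b]$ then depends on third-order structure of the marginal that is simply not pinned down by the assumptions (e.g.\ on the hypercube $\{\pm1\}^d$ these entries do not track $w_{a'}w_{b'}$). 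So the matrix you propose to factor need not have the claimed form, and no choice of $c$ fixes this. On top of that, the landscape analysis of your structured, sign-indefinite factorization $F(\mathbf{u},\mathbf{b})$ is exactly the part you flag as open; off-the-shelf benign-nonconvexity results do not cover factors whose columns are tied translates $\mat{P}_j^\top\mathbf{u}$ of one vector with mixed-sign weights.

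For contrast, the paper's use of the stride condition $s\ge\lfloor r/2\rfloor+1$ is not as an incoherence condition for your $\mathbf{M}$: it guarantees (i) each patch has a segment of coordinates shared with no other patch, so that restricting $\vect{g},\vect{h}$ to those coordinates makes the population objective \emph{exactly} $\tfrac12\norm{\vect{w}_{non}\vect{a}^\top-\vect{w}^*_{non}(\vect{a}^*)^\top}_F^2$ plus a balancing regularizer, to which the existing matrix-factorization landscape theorem of \cite{ge2017no} and the noisy-SGD result of \cite{ge2015escaping} apply verbatim (only odd/linear correlations appear, so Lemma~\ref{lem:switch} suffices and no new moment computation is needed); and (ii) only adjacent patches overlap, so $\mat{P}^{\vect{a}}$ is a tridiagonal Toeplitz-type matrix with $\lambda_{\min}\ge 1-\cos(\pi/(k+1))$, which is what makes the second stage (Convotron with an approximate, fixed second layer) contract. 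Your linear-part subtraction via OLS and your final convex refit of the output weights given $\bar{\vect{w}}$ are fine, but the middle of your argument needs to be replaced by a mechanism that, like the paper's, only ever linearizes $\sigma$ against odd functions of $\vect{x}$.
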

We refer readers to Theorem~\ref{thm:main} for the precise statement.  

\paragraph{Technical Insights}
Our algorithm is a novel combination of isotonic regression and the landscape analysis of non-convex problems.
First, inspired by recent work on isotonic regression, we extend the idea in \cite{goel2018learning} to reduce learning a convolutional neural network with piecewise linear activation to learning a convolutional neural network with \emph{linear} activation (c.f. Section~\ref{sec:analysis}).
Second, we show learning a linear convolutional filter can be reduced to a non-convex matrix factorization problem which admits a provably efficient algorithm based on non-convex geometry~\citep{ge2017no}.
Third, in analyzing our algorithm, we present a robust analysis of Convotron algorithm proposed by~\citet{goel2018learning}, in which we draw connections to the spectral properties of Toeplitz matrices.
We believe these ideas may inspire further development in designing provable learning algorithms for neural networks and other complex models.

\paragraph{Related Work}
\label{sec:rel}
%\simon{Need to add more refs in recent two months, like Shamir's over-parametrization, Du Lee Overparameterization}

%Learning theory
% From the point of view of learning theory, it is well known that training is computational infeasible  in the worst case.
% \citet{blum1989training} showed training a 3-node neural network is NP-hard and more 
% Recently \cite{goel2016reliably} showed learning a single ReLU with respect to the square loss is as hard as learning sparse parity with noise in the non-realizable setting and 
% \citet{brutzkus2017globally} proved that recovering a filter with non-overlapping patches is NP-hard, even in the realizable setting.
% These results imply that distributional assumptions are needed for efficient learning.

From the point of view of learning theory, it is well known that training is computational infeasible in the worst case \cite{goel2016reliably,brutzkus2017globally} implying that distributional assumptions are needed for efficient learning.
%Gaussian-based analysis
%\surbhi{Simon, we should probably separate out the convolutional network part. So we can highlight how we differ from all that work.}
% A line of research has focused on analyzing the dynamics of gradient descent conditioned on the input distribution being standard Gaussian $\mathcal{N}\left(\vect{0},\mat{I}\right)$.
% \citet{tian2017analytical} showed that for the one-layer one-neuron model, gradient descent applied on the population loss converges to the true weight vector with probability at least $1/2$.
% \citet{soltanolkotabi2017learning} later improved \cite{tian2017analytical} by showing that true weights can be exactly recovered via projected gradient descent on the empirical risk.
% Similarly to \cite{tian2017analytical}, \citet{brutzkus2017globally} showed that gradient descent on the population loss recovers the true weights of a convolution filter with non-overlapping patches in polynomial time. They also gave a constructive proof of a simple two dimensional filter with overlaps where gradient descent gets stuck in a sub-optimal local minima with constant probability for small learning rates.
% \citet{zhong2017recovery} proved that with spectral initialization,  stochastic gradient descent on the population loss can find the true weights of a one-hidden-layer fully connected.
% \citet{li2017convergence} showed stochastic gradient descent on the population  can recover the true weights of a one-layer residual network model with ReLU activation.

A line of research has focused on analyzing the dynamics of gradient descent conditioned on the input distribution being standard Gaussian $\mathcal{N}\left(\vect{0},\mat{I}\right)$ [\cite{tian2017analytical,soltanolkotabi2017learning,li2017convergence,zhong2017recovery,brutzkus2017globally,zhong2017learning,du2017spurious}]. Specifically for convolutional nets with non-overlapping patches, \citet{brutzkus2017globally} showed that gradient descent on the population loss recovers the true weights of a convolution filter in polynomial time. They also gave a constructive proof of a simple two dimensional filter with overlaps where gradient descent gets stuck in a sub-optimal local minima with constant probability. \citet{zhong2017learning} showed that spectral initialization followed by stochastic gradient descent learns a one-hidden-layer convolutional neural network. \citet{du2017spurious} further showed that randomly initialized gradient descent with weight normalization converges to the underlying convolutional neural network. The major drawback of these analyses is that they heavily relied on the analytical formula of the gradient and require non-overlapping patches.

%\textbf{Solve NN without gradient descent.}
Recent work has tried to relax the Gaussian input assumption and the non-overlapping structure for learning convolutional filters. \citet{du2017convolutional} showed if the patches are sufficiently close to each other then stochastic gradient descent can recover the true filter.
\cite{goel2018learning} proposed a modified iterative algorithm inspired from isotonic regression that gives the first recovery guarantees for learning a filter for commonly used overlapping patches under much weaker assumptions on the distribution. However, these two analyses only work for learning one unknown convoutional filter.

%
%\simon{Surbhi, can you edit this section?} \surbhi{I have added the following modification, I can go in more detail, but I don't know if it is needed.}
%\simon{I think we should emphasize we are proper learning so we should list some previous work that is improper.}
Moving away from gradient descent, various works have shown positive results for learning general simple fully connected neural networks in polynomial time and sample complexity under certain assumptions using techinques such as kernel methods~\citep{goel2016reliably,zhang2015learning,goel2017eigenvalue,goel2017learning} and tensor decomposition~\citep{sedghi2014provable,janzamin2015beating}. The main drawbacks include the shift to improper learning for kernel methods and the knowledge of the probability density function for tensor methods. In contrast to this, our algorithm is proper and does not assume that the input distribution is known.

% The drawback of the kernel method approach is that these techniques are usually \emph{improper}. Whereas the tensor decomposition approach requires the knowledge of the probability density function. In this work, we focus on proper learning and do not assume that the input distribution is known.

%Recent Advances in Non-convex Optimization
%\textbf{Local minimum and Global minimum.} 
Learning a neural network is often formulated as a non-convex problem.
If the objective function satisfies (1) all saddle points and local maxima are strict (i.e., there exists a direction with negative curvature), and (2) all local minima are global (no spurious local minmum), then noise-injected (stochastic) gradient descent~\citep{ge2015escaping,jin2017escape} finds a global minimum in polynomial time.
Many problems in machine learning admit these benign geometric properties, including
 including tensor decomposition~\citep{ge2015escaping}, dictionary learning~\citep{sun2017complete}, 
matrix sensing~\citep{bhojanapalli2016global,park2017non}, 
matrix completion~\citep{ge2017no,ge2016matrix} and matrix factorization~\citep{li2016symmetry}. Recent work has studied these properties for the landscape of neural networks ~\citep{kawaguchi2016deep,choromanska2015loss,hardt2016identity,haeffele2015global,mei2016landscape,freeman2016topology,safran2016quality,zhou2017landscape,nguyen2017loss,nguyen2017loss2,ge2017learning,zhou2017landscape,safran2017spurious,du2018power}.
A crucial step in our algorithm is reducing the convolutional neural network learning problem to matrix factorization and using the geometric properties of matrix factorization. 

% In particular, \citet{kawaguchi2016deep} showed that under some conditions, all local minima are global but there exists saddle points that are not strict.
% \cite{ge2017learning,safran2017spurious} showed even square loss of learning a one-layer fully connected neural network has spurious local minima even the input distribution is Gaussian.
% \cite{du2017spurious} proved that gradient descent for learning a two-layer convolutional neural network with square loss may converge to a local min with constant probability.
% More recently, \cite{yun2018critical} proved for piecewise linear nonnegative homogeneous activations almost all practical datasets there exist infinitely many local
% minima that are not global. 
% Despite these negative results, \cite{ge2017learning} showed for learning a one-layer fully connected neural network under Gaussian input assumption, a modified objective function admits the desired ``all local are global'' and ``all saddles are strict'' properties.

\paragraph{Organization}
This paper is organized as follows.
In Section~\ref{sec:rel} we review previous work on provably learning neural networks 
In Section~\ref{sec:pre} we formally state our setup.
In Section~\ref{sec:main}, we present our algorithm and our main theorem.
In Section~\ref{sec:analysis}, we illustrate our main analysis ideas.
%In Section~\ref{sec:exp}, we use simulation to demonstrate the effectiveness of our proposed algorithm.
We conclude and list future directions in Section~\ref{sec:con}.
We defer most of our technical proofs to the appendix.

\section{Preliminaries}
\label{sec:pre}
We use bold-faced letters for vectors and matrices.
We use $\norm{\cdot}_2$ to denote the Euclidean norm of a finite-dimensional vector.
For a matrix $\mat{A}$, we use $\lambda_{\max}\left(\mat{A}\right)$ to denote its eigenvalue and $\lambda_{\min}\left(\mat{A}\right)$ its smallest singular value.
Let $O(\cdot)$ and $\Omega\left(\cdot\right)$ denote standard Big-O and Big-Omega  notations, only hiding absolute constants.

In our setting, we have $n$ data points $\left\{\vect{x}_i,y_i\right\}_{i=1}^n$ where $\vect{x}_i \in \mathbb{R}^d$ and $y \in \mathbb{R}$.
We assume the label is generated by a one-hidden-layer convolutional neural network with filter size $r$, stride $s$ and $k$ hidden neurons.
Compactly we can write the formula in the following form:
 \[
y_i = f(\vect{x}_i,\vect{w}^*,\vect{a}^*),\quad \vect{x}_i \sim \inputdist
\] where  the prediction function $f$  is defined in Equation~\eqref{eqn:cnn_structure}.
To obtain a proper scaling, we let $\norm{\vect{w}^*}_2\norm{\vect{a}^*}_2 = \sigma_1$.
We also define the induced patch matrix as \[
\mat{P}\left(\vect{x}\right) = \begin{bmatrix}
\mat{P}_1\vect{x} & \ldots & \mat{P}_k \vect{x}
\end{bmatrix} \in \mathbb{R}^{r \times k}
\]
which will play an important role in our algorithm design.
Our goal is to properly learn this convolutional neural network, i.e., design a polynomial time algorithm which outputs a pair $(\vect{w},\vect{a})$ that satisfies \begin{align*}
	\expect_{\vect{x} \sim \inputdist}\left[ \left(f(\vect{w},\vect{a},\vect{x})-f(\vect{w}^*,\vect{a}^*,\vect{x})\right)^2\right] \le \epsilon.
\end{align*}

\section{Main Result}
\label{sec:main}
In this section we describe our main result. We first list our main assumptions, followed by the detailed description of our algorithm. Lastly we state the main theorem  which gives the convergence guarantees of our algorithm.
\subsection{Assumptions}
\label{sec:assumptions}
Our first assumption is on the input distribution $\inputdist$.
We assume the input distribution is symmetric, bounded and has identity covariance.
\begin{asmp}[Input Distribution Assumptions]
\label{asmp:input_dist}
We assume the input distribution satisfies the following conditions.
\begin{itemize}
\item Symmetry: $\prob\left(\vect{x}\right) = \prob\left(-\vect{x}\right).$
\item Identity covariance: $\expect_{\vect{x}\sim \inputdist }\left[\vect{x}\vect{x}^T\right] = \mat{I}$.
\item Boundedness: $\forall \vect{x}\sim \inputdist, \norm{\vect{x}}_2 \le B$ almost surely for some $B > 0$.
\end{itemize}
\end{asmp}
The symmetry assumption is used in \cite{goel2018learning} and many learning theory papers~\cite{baum1990polynomial}. The identity covariance assumption is true if the data is whitened. 
Further, in many architectures, the input of certain layers is assumed to have these properties because of the use of batch normalization~\citep{ioffe2015batch}.
Lastly, the boundedness is a standard regularity assumption to exclude pathological input distributions.
We remark that this assumption is considerably weaker than the standard Gaussian input distribution assumption used in \cite{tian2017analytical,zhong2017learning,du2017spurious}, which has the rotational invariant property.

Our second assumption is on the patch structure.
In this paper we assume the stride is larger than half of the filter size.
\begin{asmp}[Large Stride]
%\footnote{In fact our algorithm works for convolutional pattern such that there is a common non-overlaping portion.
%For the ease of presentation we only focus on convolutional pattern with large stride, which is commonly used in computer vision applications.
%}
\label{asmp:large_stride}
$s \ge \lfloor \frac{r}{2}\rfloor + 1$.
\end{asmp}
This is indeed true for a wide range of convolutional neural network used in computer vision.
For example some architectures have convolutional filter of size $3$ and stride $2$ and some use non-overlapping architectures~\citep{he2016deep}.

Next we assume the activation function is piecewise linear. 
\begin{asmp}[Piece-wise Linear Activation]
  \label{asmp:activation}
  \[\sigma(x) = \begin{cases}
  x &\text{ if } x \ge 0\\
  \alpha x &\text{ if } x < 0
  \end{cases}.
  \]
\end{asmp}
Commonly used activation functions like rectified linear unit (ReLU), Leaky ReLU and linear activation all belong to this class.

% Lastly, we assume all entries of the second layer is bounded away from $0$.
% Note if an entry is $0$, we may simply delete this neuron.
% \begin{asmp}[Second Layer Bounded Away from $0$]
% 	\label{asmp:a_min}
% \[
% \frac{\min_{j=1\ldots k}\abs{a_j}}{\norm{\vect{a}_j}_2} \ge b.
% \]
% \end{asmp}

% \surbhi{Need assumption on $\vect{a}^*$ so that Convotron still works. Probably requires that all non-zero entries are lower bounded by some constant.}

\subsection{Algorithm}
\label{sec:algo}
Now we are ready to describe our algorithm (see Algorithm \ref{algo:main}). The algorithm has three stages, first we learn the outer layer weights upto sign, second we use these fixed outer weights to recover the filter weight and last we choose the best weight combination thus recovered.

\begin{algorithm}[t]
\floatname{algorithm}{Algorithm}
  \caption{Learning One-hidden-Layer Convolutional Network\label{algo:main}}
  \begin{algorithmic}[1]
    \Statex \textbf{Input}: Input distribution $\inputdist$. Number of iterations: $T_1, T_2$. Number of samples: $T_3$% \Statex \textbf{Input}: Three set of samples $\mathcal{S}_1 = (\textbf{x}^{i},y^{i})_{i=1}^{T_1}$, $\mathcal{S}_2 = (\textbf{x}^{i},y^{i})_{i=T_1 + 1}^{T_1+T_2}$ and
    % $\mathcal{S}_3 = (\textbf{x}^{i},y^{i})_{i=T_1 + T_2 + 1}^{T_1+T_2+T_3}$.
    % Initializations: $\vect{w}^{(0)}_{non} \in \mathbb{R}^{r-2s}$,
    % $\vect{a}^{(0)} \in \mathbb{R}^{k}$ and $\vect{w}^{(0)} \in \mathbb{R}^{r}$.
    Step sizes: $\eta_1 > 0$, $\eta_2 > 0$.
   \Statex \textbf{Output}: Parameters of the one-hidden-layer CNN: $\vect{w}$ and $\vect{a}$.
     \State \textbf{Stage 1}: Run Double Convotron (Algorithm \ref{alg:doubleconvotron}) for $T_1$ iterations with step size $\eta_1$ to obtain $\vect{a}^{(T_1)}$.
     \State \textbf{Stage 2}: Run Convotron (Algorithm~\ref{alg:convotron})  using  $\vect{a}^{(T_1)}$ and $-\vect{a}^{(T_1)}$ for $T_2$ iterations and step size $\eta_2$ to obtain $\vect{w}^{(+)}$ and $\vect{w}^{(-)}$.
   \State \textbf{Stage 3}: Choose parameters with lower empirical loss on $T_3$ samples drawn from $\inputdist$ from $\left(\vect{w}^{(+)},\vect{a}^{(T_1)}\right)$ and $\left(\vect{w}^{(-)},-\vect{a}^{(T_1)}\right)$.
\end{algorithmic}
\end{algorithm}
%\surbhi{We can add the exact algorithm descriptions later}
\paragraph{Stage 1: Learning the Non-overlapping Part of the Convolutional Filter and Linear Weights}
Our first observation is even if there may be overlapping patches, as long as there exists some non-overlapping part, we can learn this part and the second layer jointly.
To be specific, with filter size being $r$ and stride being $s$, if $s \ge \lfloor \frac{r}{2}\rfloor + 1$, for $j=1.\ldots,k$ we define the selection matrix for the non-overlapping part of each patch\begin{align*}
\mat{P}_j^{non} = 
[\underbrace{\mat{0}}_{js}  \underbrace{\mat{I}}_{r-2s}   \underbrace{\mat{0}}_{d-(j-2)s-r}] \in \mathbb{R}^{(r-2s) \times d}.
\end{align*} 
Note that for any $j_1 \neq j_2$, there is no overlap between the selected coordinates by $\mat{P}_{j_1}^{non}$ and 
$\mat{P}_{j_2}^{non}$. 
Therefore, for a filter $\vect{w}$, there is a segment $\left[w_{r-s+1},\ldots,w_{r}\right]$ with length $(r-2s)$ which acts on the non-overlapping part of each patch. 
We denote $\vect{w}_{non} = \left[w_{r-s+1},\ldots,w_{r}\right]$ and our goal in this stage is to learn $\vect{w}_{non}^*$ and $\vect{a}^*$ jointly.
%\simon{may use better wording.}

In this stage, our algorithm proceeds as follows.
Given $\vect{w}_{non}$, $\vect{a}$ and a sample $\left(\vect{x},y\right)$, we define \begin{align}
\vect{g}\left(\vect{w}_{non},\vect{a},\vect{x},y\right) = &\frac{2}{1+\gamma}\left(
	\hat{f}(\vect{w}_{non},\vect{a},\vect{x}) - y
	\right) \sum_{j=1}^{k}a_i\mat{P}_j^{non} \vect{x}+\frac{1}{4}\left(\norm{\vect{w}_{non}}_2^2-\norm{\vect{a}}_2^2\right)\vect{w}_{non} \label{eqn:w_non_update}\\
\vect{h}\left(\vect{w}_{non},\vect{a},\vect{x},y\right) =  &\frac{2}{1+\gamma}\left(
	\hat{f}(\vect{w}_{non},\vect{a},\vect{x}) - y\right) \begin{pmatrix}
		\vect{w}_{non}^\top\mat{P}_1^{non} \vect{x}\\
		\ldots\\
		\vect{w}_{non}^\top\mat{P}_k^{non} \vect{x}
	\end{pmatrix}+ \frac{1}{4}\left(\norm{\vect{a}}_2^2-\norm{\vect{w}_{non}}_2^2\right)\vect{a} \label{eqn:a_update}
\end{align}
where $\hat{f}\left(\vect{w}_{non},\vect{a},\vect{x}\right) = \sum_{j=1}^{k}a_j\relu{\vect{w}_{non}^\top \mat{P}_j^{non}\vect{x}}$ is the prediction function only using $\vect{w}_{non}$.

% \simon{
As will be apparent in Section~\ref{sec:analysis}, $\vect{g}$ and $\vect{h}$ are unbiased estimates of the gradient for the loss function corresponding to learning a linear CNN.
The term $\frac{1}{4}\left(\norm{\vect{w}_{non}}_2^2-\norm{\vect{a}}_2^2\right)\vect{w}_{non}$  and $\frac{1}{4}\left(\norm{\vect{a}}_2^2-\norm{\vect{w}_{non}}_2^2\right)\vect{a}$ are is the gradient induced by the regularization  $\frac{1}{4}\left(\norm{\vect{w}_{non}}_2^2-\norm{\vect{a}}_2^2\right)^2$, which is used to balance the magnitude between $\vect{w}_{non}$ and $\vect{a}$ and make the algorithm more stable.  
% }

With some initialization $\vect{w}_{non}^{(0)}$, $\vect{a}^{(0)}$,
we use the following iterative updates inspired by isotonic regression~\citep{goel2018learning}, for $t=0,\ldots,T_1-1$
% \simon{maybe add more intuition here}
\begin{align}
	\vect{w}_{non}^{(t+1)} \leftarrow &\vect{w}_{non}^{(t)} - \eta_1\vect{g}\left(\vect{w}_{non}^{(t)},\vect{a}^{(t)},\vect{x}^{(t)},y^{(t)}\right) + \eta_1 \xi_{\vect{w}_{non}}^{(t)},\\ 
	\vect{a}^{(t+1)} \leftarrow&\vect{a}^{(t)} -  \eta_1\vect{h}\left(\vect{w}_{non}^{(t)},\vect{a}^{(t)},\vect{x}^{(t)},y^{(t)}\right) + \eta_1 \xi_{\vect{a}}^{(t)}	\label{eqn:double_convotron}
\end{align} where $\eta_1 > 0$ is the step size parameter, $\xi_{\vect{w}_{non}}^{(t)}$ and $\xi_{\vect{a}}^{(t)}$ are uniformly sampled a unit sphere and  at iteration we use a fresh sample $\left(\vect{x}^{(t)},y^{(t)}\right)$.
% \simon{
Here we add isotropic noise $\xi_{\vect{w}_{non}}^{(t)}$ and $\xi_{\vect{a}}^{(t)}$ because the objective function for learning  a linear CNN is non-convex and there may exist saddle points.
Adding noise can help escape from these saddle points.
We refer readers to \cite{ge2015escaping} for more technical details regarding this.
% }
As will be apparent in Section~\ref{sec:analysis}, after sufficient iterations, we obtain a pair $\left(\vect{w}^{(T_1)},\vect{a}^{(T_1)}\right)$ such that either it is close to the truth $\left(\vect{w}_{non}^*,\vect{a}^*\right)$ or close to the negative of the truth $\left(-\vect{w}_{non}^*,-\vect{a}^*\right)$.

\begin{algorithm}[t]
   \caption{Double Convotron}
   \label{alg:doubleconvotron}
\begin{algorithmic}
   \State Initialize $\vect{w}^{(0)}_{non} \in \mathbb{R}^{r-2s}$ and $\vect{a}^{(0)} \in \mathbb{R}^{k}$ randomly
   \For{$t=1$ {\bfseries to} $T$}
   \State Draw $(\vect{x}^{(t)}, y^{(t)}) \sim \inputdist$
   \State Compute $\vect{g}\left(\vect{w}_{non},\vect{a},\vect{x}^{(t)},y^{(t)}\right) $ and $\vect{h}\left(\vect{w}_{non},\vect{a},\vect{x}^{(t)},y^{(t)}\right)$ according to Equation (\ref{eqn:w_non_update}) and (\ref{eqn:a_update}).
   \State Set $\vect{w}_{non}^{(t+1)} = \vect{w}_{non}^{(t)} - \eta_1\vect{g}\left(\vect{w}_{non}^{(t)},\vect{a}^{(t)},\vect{x}^{(t)},y^{(t)}\right) + \eta_1 \xi_{\vect{w}_{non}}^{(t)}$
   \State Set $\vect{a}^{(t+1)} = \vect{a}^{(t)} -  \eta_1\vect{h}\left(\vect{w}_{non}^{(t)},\vect{a}^{(t)},\vect{x}^{(t)},y^{(t)}\right) + \eta_1 \xi_{\vect{a}}^{(t)}$
   \EndFor
   \State {Return $\vect{a}^{(T+1)}$}
\end{algorithmic}
\end{algorithm}

\paragraph{Stage 2: Convotron with fixed Linear Layer}
% \simon{Surbhi: please add your algorithm description here. 
% We will need to try both $\vect{a}$ and $-\vect{a}$ that we obtained from the first step and we will have two possible filters $\vect{w}_1$ or $\vect{w}_2$.
% }
In Stage 1 we have learned a good approximation to the second layer (either $\vect{a}^{(T_1)}$ or $-\vect{a}^{(T_1)}$).
Therefore, the problem reduces to learning a convolutional filter.
We run Convotron (Algorithm~\ref{alg:convotron}) proposed in~\cite{goel2018learning} using $\vect{a}^{(T_1)}$ and $-\vect{a}^{(T_1)}$ to obtain corresponding weight vectors $\vect{w}^{(+)}$ and $\vect{w}^{(-)}$. We show that the Convotron analysis can be extended to handle approximately known outer layer weight vectors as long as their is less than half overlap.

\begin{algorithm}[t]
   \caption{Convotron \citep{goel2018learning}}
   \label{alg:convotron}
\begin{algorithmic}
   \State Initialize $\vect{w}_1 := 0 \in \mathbb{R}^r$.
   \For{$t=1$ {\bfseries to} $T$}
   \State Draw $(\vect{x}^{(t)}, y^{(t)}) \sim \inputdist$
   \State Let $\vect{g}^{(t)} = (y^{(t)} - f(\vect{w}^{(t)}, \vect{a}, \vect{x}^{(t)}))  \left(\sum_{i=1}^k a_i \mat{P}_i  \vect{x}^{(t)}\right)$
   \State Set $\vect{w}^{(t+1)} = \vect{w}^{(t)} + \eta \vect{g}^{(t)}$
   \EndFor
   \State {Return $\vect{w}_{T+1}$}
\end{algorithmic}
\end{algorithm}

\paragraph{Stage 3: Validation}
In stage 2 we have obtained two possible solutions $\left(\vect{w}^{(+)},\vect{a}^{(T)}\right)$ and $\left(\vect{w}^{(-)},-\vect{a}^{(T)}\right)$.
We know at least one of them is close to the ground truth. Closeness in ground truth implies small squared loss (c.f. Lemma \ref{lem:loss}). 
In the last stage we use a validation set to choose the right one.
To do this, we simply use $T_3 = \poly\left(k,B,\frac{1}{\epsilon}\right)$ fresh samples and output the solution which gives lower squared error. 
\begin{align}
\left(\vect{w},\vect{a}\right) = \argmin_{\left(\vect{w},\vect{a}\right) \in \left\{\left(\vect{w}^{(+)},\vect{a}^{(T)}\right), \left(\vect{w}^{(-)},-\vect{a}^{(T)}\right)\right\}} \frac{1}{T_3}\sum_{i=1}^{T_3}\left(y^{(i)} - f\left(\vect{w},\vect{a},\vect{x}^{(i)}\right) \right)^2.\label{eqn:hypo_testing}
\end{align}
Since we draw many samples, the empirical estimates will be close to the true loss using standard concentration bounds and choosing the minimum will give us the correct solution.
%\simon{add more discussion}

\subsection{Main Theorem}
The following theorem shows that Algorithm \ref{algo:main} is guaranteed to learn the target convolutinoal neural network in polynomial time.
\begin{thm}[Theorem \ref{thm:main_informal} (Formal)]
  \label{thm:main}{}
Under Assumptions~\ref{asmp:input_dist}-\ref{asmp:activation}, if we set $\abs{\mathcal{S}_1},\abs{\mathcal{S}_2},\abs{\mathcal{S}_3} = \Omega\left(\poly\left(k,B,\frac{1}{\epsilon}\right)\right)$ and $\eta_1, \eta_2 = O\left(\poly\left(\frac{1}{k},\frac{1}{B},\epsilon\right)\right)$ then with high probability, Algorithm~\ref{algo:main} returns a pair $\left(\vect{w},\vect{a}\right)$ which satisfies\begin{align*}
  \expect_{\vect{x} \sim \inputdist}\left[ \left(f(\vect{w},\vect{a},\vect{x})-f(\vect{w}^*,\vect{a}^*,\vect{x})\right)^2\right] \le \epsilon.
\end{align*}
\end{thm}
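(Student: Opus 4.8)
The plan is to analyze the three stages of Algorithm~\ref{algo:main} in turn, carrying the approximation error forward, and to close with a concentration argument. Write $\vect{M}^*=\vect{w}^*_{non}(\vect{a}^*)^\top$ for the rank-one matrix formed by the non-overlapping part of the true filter and the true outer layer. I would prove, in order: (a) Stage~1 (Double Convotron, Algorithm~\ref{alg:doubleconvotron}) outputs $\vect{a}^{(T_1)}$ that, up to sign and the harmless rescaling of the factorization, is as close to $\vect{a}^*$ as desired; (b) Stage~2 (Convotron, Algorithm~\ref{alg:convotron}, run on $\vect{a}^{(T_1)}$ and on $-\vect{a}^{(T_1)}$) produces $\vect{w}^{(+)},\vect{w}^{(-)}$ so that one of the pairs $(\vect{w}^{(+)},\vect{a}^{(T_1)})$, $(\vect{w}^{(-)},-\vect{a}^{(T_1)})$ is close to $(\vect{w}^*,\vect{a}^*)$ --- the other is close to $(-\vect{w}^*,-\vect{a}^*)$, which for a genuinely nonlinear $\sigma$ is a different function, which is exactly why Stage~3 is needed; (c) Stage~3 picks out the accurate pair. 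Step (c) invokes Lemma~\ref{lem:loss}: parameter closeness to $(\vect{w}^*,\vect{a}^*)$ implies a small population squared loss, so it suffices to manufacture one good candidate and then recognize it by validation.

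\paragraph{Stage~1: reduction to matrix factorization.}
The crucial fact is that, under Assumptions~\ref{asmp:input_dist} and~\ref{asmp:activation}, the population versions of the stochastic updates $\vect{g},\vect{h}$ in~\eqref{eqn:w_non_update}--\eqref{eqn:a_update} are exactly the gradients of
\[
F(\vect{w}_{non},\vect{a}) \;=\; c\,\norm{\vect{w}_{non}\vect{a}^\top-\vect{M}^*}_F^2 \;+\; \tfrac14\big(\norm{\vect{w}_{non}}_2^2-\norm{\vect{a}}_2^2\big)^2
\]
for an absolute constant $c>0$. Indeed, symmetry of $\inputdist$ makes the nonlinear part $\tfrac{1-\alpha}{2}\abs{\cdot}$ of $\sigma$ an even function of $\vect{x}$, hence uncorrelated with the odd linear term $\sum_j a_j\mat{P}_j^{non}\vect{x}$ that multiplies it in $\vect{g},\vect{h}$; and symmetry together with identity covariance and the disjointness of the coordinate sets picked out by distinct $\mat{P}_j^{non}$ makes the surviving linear-prediction error a quadratic form in $\vect{w}_{non}\vect{a}^\top$ whose measurement operator is an \emph{exact} isometry in expectation (in particular the overlapping coordinates of $\vect{w}^*$ drop out). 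Thus $F$ is, up to the balancing regularizer, the asymmetric rank-one matrix-factorization objective with target $\vect{M}^*$, whose landscape is benign --- all local minimizers are global (equal to $\pm$ the balanced factorization of $\vect{M}^*$) and all other critical points are strict saddles --- by~\citet{ge2017no,li2016symmetry}, the regularizer keeping $\norm{\vect{w}_{non}}_2\approx\norm{\vect{a}}_2$. Hence the noise-injected iteration~\eqref{eqn:double_convotron} is perturbed stochastic gradient descent on $F$, so by~\citet{ge2015escaping} it reaches an approximate global minimizer in $T_1=\poly(k,B,1/\epsilon)$ steps with $\eta_1=O(\poly(1/k,1/B,\epsilon))$; then $\vect{w}_{non}^{(T_1)}(\vect{a}^{(T_1)})^\top\approx\vect{M}^*$, and balancedness pins the factorization down to a sign, so $\vect{a}^{(T_1)}$ is close to $\pm\vect{a}^*$ up to that rescaling.

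\paragraph{Stage~2: robust Convotron and Toeplitz spectra.}
Fix the outer layer to $\vect{a}=\vect{a}^{(T_1)}$ (the run on $-\vect{a}^{(T_1)}$ is symmetric) and assume $\vect{a}\approx\vect{a}^*$. I would give a robust version of the Convotron analysis of~\cite{goel2018learning}: the expected update of Algorithm~\ref{alg:convotron} is $\mat{A}\,(\vect{w}^*-\vect{w}^{(t)})$ plus a bias of norm $O\!\big(\poly(k,B)\cdot\norm{\vect{a}-\vect{a}^*}_2\big)$, where $\mat{A}=\expect_{\vect{x}\sim\inputdist}\big[(\sum_i a_i\mat{P}_i\vect{x})(\sum_i a_i\mat{P}_i\vect{x})^\top\big]$. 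By identity covariance $\mat{A}=\sum_{i,i'}a_ia_{i'}\mat{P}_i\mat{P}_{i'}^\top$ with $\mat{P}_i\mat{P}_{i'}^\top$ a shift matrix supported on the $(i'-i)s$-th diagonal; Assumption~\ref{asmp:large_stride} ($2s>r$, i.e.\ overlap $r-s$ less than half the filter length) kills every shift with $\abs{i-i'}\ge2$, leaving a banded (Toeplitz-type) matrix whose least eigenvalue is $\Omega(\poly(1/k))$ --- a standard estimate, e.g.\ bounding $\abs{\sum_i a_ia_{i+1}}$ by $\cos(\pi/(k+1))\norm{\vect{a}}_2^2$ via the path-graph spectral radius. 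So the noiseless dynamics contracts toward $\vect{w}^*$ with a fixed-point error $O(\poly(k,B)\norm{\vect{a}-\vect{a}^*}_2)$, which Stage~1 makes negligible; with the martingale fluctuations controlled by a fresh bounded sample per step, Convotron run for $T_2=\poly(k,B,1/\epsilon)$ steps with $\eta_2=O(\poly(1/k,1/B,\epsilon))$ returns $\vect{w}^{(+)}$ with $\norm{\vect{w}^{(+)}-\vect{w}^*}_2$ small, and symmetrically $\vect{w}^{(-)}$ with $\norm{\vect{w}^{(-)}+\vect{w}^*}_2$ small.

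\paragraph{Stage~3, conclusion, and main obstacle.}
By Lemma~\ref{lem:loss} the accurate one of $(\vect{w}^{(+)},\vect{a}^{(T_1)})$, $(\vect{w}^{(-)},-\vect{a}^{(T_1)})$ has population squared loss at most $\epsilon/2$. Both candidates have $\poly(k,B)$-bounded norm (the Stage~1 iterates stay bounded since $F$ is coercive, Convotron starts at $\vect{0}$ with per-step bounded increments), so $\abs{f(\vect{w},\vect{a},\vect{x})}\le\poly(k,B)$ on $\supp(\inputdist)$; Hoeffding over the two candidates with $T_3=\poly(k,B,1/\epsilon)$ fresh samples then puts the empirical objective~\eqref{eqn:hypo_testing} within $\epsilon/4$ of the population loss for both, with high probability, so the $\argmin$ in~\eqref{eqn:hypo_testing} outputs a pair with population loss at most $\epsilon$; collecting the polynomial requirements on $T_1,T_2,T_3$ and the step sizes gives the stated bounds. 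I expect the main difficulty to be the two new pieces: (i) in Stage~1, rigorously showing --- from symmetry, isotropy and boundedness alone, not Gaussianity --- that Double Convotron is perturbed SGD on a matrix-factorization objective with a benign landscape (controlling the stochastic-gradient variance, keeping iterates bounded, and handling the mismatch between the nonlinear/overlapping parts of $y$ and the linear surrogate) so that~\citet{ge2017no,ge2015escaping} truly apply; and (ii) in Stage~2, the robust Convotron analysis --- carrying the Stage~1 error as a controlled bias and, above all, proving the quantitative lower bound on $\lambda_{\min}(\mat{A})$ from the banded Toeplitz structure precisely under the less-than-half-overlap assumption. Calibrating the accuracy targets so the three stages' errors compose to $\epsilon$ is the remaining bookkeeping.
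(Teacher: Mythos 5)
Your proposal is correct and follows essentially the same route as the paper: Stage~1 is reduced to the benign-landscape rank-one matrix-factorization objective via the symmetry/isotropy argument (the paper's Lemma~\ref{thm:unbiased_for_linear} and Theorem~\ref{thm:linear_cnn}, invoking \citet{ge2017no} and \citet{ge2015escaping}), Stage~2 is the robust Convotron analysis with the $1-\cos\left(\frac{\pi}{k+1}\right)$ Toeplitz eigenvalue bound (Theorems~\ref{thm:convotron} and~\ref{thm:eigen}), and Stage~3 is Hoeffding plus Lemma~\ref{lem:loss}. The differences (the unspecified constant $c$ in the surrogate objective, the phrasing of the bias term) are cosmetic.
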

To our knowledge, this is the first polynomial time proper learning algorithm for convolutional neural networks with two unknown layers with overlapping patches\footnote{Note that \citet{goel2018learning} gave the first polynomial time proper algorithm for convolutional network with overlapping patches. 
However they required the outer layer to be average pooling.}.

\section{Proofs and Technical Insights}
\label{sec:analysis}
In this section we list the key ideas used for designing Algorithm~\ref{algo:main} and proving its correctness. We discuss the analysis stage-wise for ease of understnading. 
Some technical derivations are deferred to the appendix.

\subsection{Analysis of Stage 1}
\label{sec:stage1_analysis}
\paragraph{Learning a non-overlapping CNN with linear activation.}
We first consider the problem of learning a convolutional neural network with linear activation function and non-overlapping patches.
For this setting, we can write the prediction function in a compact form:\begin{align*}
	f_{lin}\left(\vect{w},\vect{a},\vect{x}\right) = \vect{w}^\top \mat{P}\left(\vect{x}\right)\vect{a} = \langle \mat{P}\left(\vect{x}\right),\vect{w}\vect{a}^\top \rangle.
\end{align*}
The label also admits this form:\[
y = \langle \mat{P}\left(\vect{x}\right),\vect{w}^*\left(\vect{a}^*\right)^\top \rangle.
\]
A natural way to learn $\vect{w}^*$ and $\vect{a}^*$ is to consider solving a square loss minimization problem.
\begin{align*}
 \ell\left(\vect{w},\vect{a},\vect{x}\right)= &\left(\langle \mat{P}\left(\vect{x}\right),\vect{w}\vect{a}^\top \rangle- \langle \mat{P}\left(\vect{x}\right),\vect{w}^*\left(\vect{a}^*\right)^\top \rangle\right)^2 \\
	= &\vectorize{\vect{w}\vect{a}^\top - \vect{w}^*\left(\vect{a}^*\right)^\top}^\top  \vectorize{\mat{P}\left(\vect{x}\right)}\vectorize{\mat{P}\left(\vect{x}\right)}^\top \vectorize{\vect{w}\vect{a}^\top - \vect{w}^*\left(\vect{a}^*\right)^\top}
\end{align*}
Now, taking expectation with respect to $\vect{x}$, we have \begin{align}
	L\left(\vect{w},\vect{a}\right) = &\expect_{\vect{x}} \left[\ell\left(\vect{w},\vect{a},\vect{x}\right)\right]  \nonumber\\
	= & \vectorize{\vect{w}\vect{a}^\top - \vect{w}^*\left(\vect{a}^*\right)^\top}^\top  \expect_{\vect{x}}\left[\vectorize{\mat{P}\left(\vect{x}\right)}\vectorize{\mat{P}\left(\vect{x}\right)}^\top \right]\vectorize{\vect{w}\vect{a}^\top - \vect{w}^*\left(\vect{a}^*\right)^\top} \nonumber\\
	= &\norm{\vect{w}\vect{a}^\top - \vect{w}^*\left(\vect{a}^*\right)^\top}_F^2 \label{eqn:why_mf}
\end{align}
where the last step we used our assumptions that patches are non-overlapping and the covariance of $\vect{x}$ is the identity.
From Equation~\eqref{eqn:why_mf}, it is now apparent that the population $L_2$ loss is just the standard loss for rank-$1$ matrix factorization problem!

Recent advances in non-convex optimization shows the following regularized loss function
\begin{align}
	L_{reg}\left(\vect{w},\vect{a}\right) = \frac{1}{2}\norm{\vect{w}\vect{a}^\top - \vect{w}^*\left(\vect{a}^*\right)^\top}_F^2 +
	\frac{1}{8}\left(\norm{\vect{w}}_2^2-\norm{\vect{a}}_2^2\right)^2.
	\label{eqn:mf_obj_reg}
\end{align}
satisfies all local minima are global and all saddles points and local maxima has a negative curvature~\cite{ge2017no} and thus allows simple local search algorithm to find a global minimum.
Though the objective function~\eqref{eqn:mf_obj_reg} is a population risk, we can obtain its stochastic gradient by our samples if we use fresh sample at each iteration.
We define \begin{align}
	\vect{g}_\vect{w}^{(t)} = & \left(f\left(\vect{w}^{(t)},\vect{a}^{(t)},\vect{x}^{(t)}\right)-y^{(t)}\right)\mat{P}\left(\vect{x}^{t}\right)\vect{a}^{(t)} + \frac{1}{2}\left(\norm{\vect{w}^{(t)}}_2^2-\norm{\vect{a}^{t}}_2^2\right)\vect{w}^{(t)} \label{eqn:w_sg_oracle}\\
	\vect{g}_\vect{a}^{(t)} = &\left(f\left(\vect{w}^{(t)},\vect{a}^{(t)},\vect{x}^{(t)}\right)-y^{(t)}\right)\mat{P}\left(\vect{x}^{t}\right)^\top\vect{w}^{(t)} +
	\frac{1}{2}\left(\norm{\vect{a}^{t}}_2^2-\norm{\vect{w}^{(t)}}_2^2\right)\vect{a}^{(t)} \label{eqn:a_sg_oracle}
\end{align}
where $\left(\vect{x}^{(t)},y^{(t)}\right)$ is the sample we use in the $t$-th iteration.
In expectation this is the standard gradient descent algorithm for solving~\eqref{eqn:mf_obj_reg}:
\begin{align*}
	\expect_{\vect{x}}\left[\vect{g}_\vect{w}^{(t)} \right] = \frac{\partial L_{reg}\left(\vect{w}^{(t)},\vect{a}^{(t)}\right)}{\partial \vect{w}^{(t)}}, \quad
	\expect_{\vect{x}}\left[\vect{g}_\vect{a}^{(t)} \right] = \frac{\partial L_{reg}\left(\vect{w}^{(t)},\vect{a}^{(t)}\right)}{\partial \vect{a}^{(t)}}.
\end{align*}
With this stochastic gradient oracle at hand, we can implement the noise-injected stochastic gradient descent proposed in \cite{ge2015escaping}.
\begin{align*}
	\vect{w}^{(t+1)} = \vect{w}^{(t)} - \eta \vect{g}_\vect{w}^{(t)}  +\eta\xi_{\vect{w}}^{(t)}, \\ 
	\vect{a}^{(t+1)} = \vect{w}^{(t)} - \eta \vect{g}_\vect{a}^{(t)} + \eta\xi_{\vect{a}}^{(t)}
\end{align*}
where $\vect{\xi}_{\vect{w}}^{(t)}$ and $\vect{\xi}_{\vect{a}}^{(t)}$ are sampled from a unit sphere.
Theorem 6 in \cite{ge2015escaping} implies after polynomial iterations, this iterative procedure returns an $\epsilon$-optimal solution of the objective function~\eqref{eqn:mf_obj_reg} with high probability.

\paragraph{Learning non-overlapping part of a CNN with piece-wise linear activation function}
Now we consider piece-wise linear activation function.
Our main observation is that we can still obtain a stochastic gradient oracle for the \emph{linear} convolutional neural network using Equation~\eqref{eqn:w_non_update} and~\eqref{eqn:a_update}.
Formally, we have the following theorem.
\begin{lem}[Properties of Stochastic Gradient for Linear CNN]\label{thm:unbiased_for_linear}
Define \[
L_{reg}\left(\vect{w}_{non},\vect{a}\right) = \frac{1}{2}\norm{\vect{w}_{non}\vect{a}^\top - \vect{w}^*_{non}\left(\vect{a}^*\right)^\top}_F^2 + \frac{1}{8}\left(\norm{\vect{w}_{non}}_2^2 - \norm{\vect{a}}_2^2\right)^2.
\]
Under Assumption~\ref{asmp:input_dist}, we have \begin{align*}
\expect_{\vect{x}}\left[\vect{g}\left(\vect{w}_{non},\vect{a},\vect{x},y\right)\right] =
\frac{\partial L_{reg}\left(\vect{w}_{non},\vect{a}\right)}{\partial \vect{w}_{non}}, \expect_{\vect{x}}\left[\vect{h}\left(\vect{w}_{non},\vect{a},\vect{x},y\right)\right] =
\frac{\partial L_{reg}\left(\vect{w}_{non},\vect{a}\right)}{\partial \vect{a}}
\end{align*} where $\vect{g}\left(\vect{w}_{non},\vect{a},\vect{x},y\right)$ and $\vect{h}\left(\vect{w}_{non},\vect{a},\vect{x},y\right)$ are defined in Equation~\eqref{eqn:w_non_update} and~\eqref{eqn:a_update}, respectively.
Further, if $\norm{\vect{w}_{non}}_2 = O(\poly\left(\sigma_1\right)), \norm{\vect{a}}_2 = O(\poly\left(\sigma_1\right))$, then the differences are also bounded \begin{align*}
\norm{\vect{g}\left(\vect{w}_{non},\vect{a},\vect{x},y\right) - \frac{\partial L_{reg}\left(\vect{w}_{non},\vect{a}\right)}{\partial \vect{w}_{non}}} \le D,  \norm{\vect{h}\left(\vect{w}_{non},\vect{a},\vect{x},y\right)-\frac{\partial L_{reg}\left(\vect{w}_{non},\vect{a}\right)}{\partial \vect{a}}}_2 \le D
\end{align*} for some $D = O\left(\poly\left(B,k,\sigma_1\right)\right)$.
\end{lem}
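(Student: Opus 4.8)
The plan is to prove the two assertions in turn; the unbiasedness identities are the substance, and the norm bounds are routine. Throughout I would write $\sigma(z)=\tfrac{1+\gamma}{2}z+\tfrac{1-\gamma}{2}|z|$ for the piecewise-linear activation (with $\gamma=\alpha$ the leakage parameter of Assumption~\ref{asmp:activation}); this is the identity on $\{z\ge 0\}$ and $\gamma z$ on $\{z<0\}$, so it splits $\sigma$ into an odd part $\tfrac{1+\gamma}{2}z$ and an even part $\tfrac{1-\gamma}{2}|z|$, and it is exactly this decomposition that motivates the prefactor $\tfrac{2}{1+\gamma}$ in $\vect{g}$ and $\vect{h}$.

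For unbiasedness of $\vect{g}$, substitute the decomposition into $\hat f(\vect{w}_{non},\vect{a},\vect{x})=\sum_j a_j\sigma(\vect{w}_{non}^\top\mat{P}_j^{non}\vect{x})$ and into the label $y=\sum_j a^*_j\sigma((\vect{w}^*)^\top\mat{P}_j\vect{x})$, and expand $\expect_{\vect{x}}[(\hat f-y)\sum_i a_i\mat{P}_i^{non}\vect{x}]$. Every resulting term carrying an absolute value $|\vect{w}_{non}^\top\mat{P}_j^{non}\vect{x}|$ or $|(\vect{w}^*)^\top\mat{P}_j\vect{x}|$ is an \emph{even} scalar function of $\vect{x}$ times the \emph{odd} vector $\mat{P}_i^{non}\vect{x}$, hence an odd function of $\vect{x}$; by the symmetry assumption $\vect{x}\distequal-\vect{x}$ all such expectations vanish. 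What remains is bilinear in $\vect{x}$, so by the identity-covariance assumption it collapses to sums of $\mat{P}_i^{non}(\mat{P}_j^{non})^\top$ and $\mat{P}_i^{non}\mat{P}_j^\top$. Here Assumption~\ref{asmp:large_stride} enters: $s\ge\lfloor r/2\rfloor+1$ is exactly the condition under which the ``middle'' coordinates of patch $i$ lie in patch $i$ and in no other patch, so $\mat{P}_i^{non}(\mat{P}_j^{non})^\top=\delta_{ij}\mat{I}$ and $\mat{P}_i^{non}\mat{P}_j^\top=\delta_{ij}\,(\text{projection onto the non-overlap coordinates})$, the latter sending $\vect{w}^*$ to $\vect{w}^*_{non}$. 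Collecting terms gives $\tfrac{2}{1+\gamma}\expect_{\vect{x}}[(\hat f-y)\sum_i a_i\mat{P}_i^{non}\vect{x}]=\|\vect{a}\|_2^2\vect{w}_{non}-(\vect{a}^\top\vect{a}^*)\vect{w}^*_{non}=(\vect{w}_{non}\vect{a}^\top-\vect{w}^*_{non}(\vect{a}^*)^\top)\vect{a}$, which is $\partial_{\vect{w}_{non}}$ of $\tfrac12\|\vect{w}_{non}\vect{a}^\top-\vect{w}^*_{non}(\vect{a}^*)^\top\|_F^2$; the remaining deterministic term in $\vect{g}$ is by inspection the gradient in $\vect{w}_{non}$ of the balancing regularizer $\tfrac18(\|\vect{w}_{non}\|_2^2-\|\vect{a}\|_2^2)^2$. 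Adding the two proves $\expect_{\vect{x}}[\vect{g}]=\partial_{\vect{w}_{non}}L_{reg}$. The identity for $\vect{h}$ follows from the same computation with the roles of $\vect{w}_{non}$ and $\vect{a}$ interchanged: now $\vect{w}_{non}^\top\mat{P}_j^{non}\vect{x}$ is the odd/linear factor and the even $|\cdot|$ pieces again have zero expectation, yielding $\expect_{\vect{x}}[\vect{h}]=(\vect{w}_{non}\vect{a}^\top-\vect{w}^*_{non}(\vect{a}^*)^\top)^\top\vect{w}_{non}+\partial_{\vect{a}}\big[\tfrac18(\|\vect{a}\|_2^2-\|\vect{w}_{non}\|_2^2)^2\big]=\partial_{\vect{a}}L_{reg}$.

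For the boundedness claim I would not use unbiasedness at all, only $\|\vect{g}-\partial_{\vect{w}_{non}}L_{reg}\|\le\|\vect{g}\|+\|\partial_{\vect{w}_{non}}L_{reg}\|$ together with crude bounds. Using $\|\vect{x}\|_2\le B$, $|\sigma(z)|\le\max(1,\gamma)|z|$, Cauchy--Schwarz, $\|\vect{a}\|_1\le\sqrt k\|\vect{a}\|_2$ and $\|\mat{P}_j^{non}\vect{x}\|_2\le B$: $|\hat f|\le\sqrt kB\|\vect{w}_{non}\|_2\|\vect{a}\|_2$, $|y|\le\sqrt kB\,\sigma_1$ (since $\|\vect{w}^*\|_2\|\vect{a}^*\|_2=\sigma_1$), and $\|\sum_i a_i\mat{P}_i^{non}\vect{x}\|_2\le\sqrt kB\|\vect{a}\|_2$, so the data-dependent part of $\vect{g}$ has norm at most $\tfrac{2}{1+\gamma}(|\hat f|+|y|)\sqrt kB\|\vect{a}\|_2$; the regularizer parts of both $\vect{g}$ and $\partial_{\vect{w}_{non}}L_{reg}$ are bounded by $\|\vect{a}\|_2^2\|\vect{w}_{non}\|_2+\sigma_1\|\vect{a}\|_2+|\,\|\vect{w}_{non}\|_2^2-\|\vect{a}\|_2^2\,|\,\|\vect{w}_{non}\|_2$. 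Under the hypothesis $\|\vect{w}_{non}\|_2,\|\vect{a}\|_2=O(\poly(\sigma_1))$ (and $\tfrac{2}{1+\gamma}\le 2$ as $\gamma\ge 0$), every piece is $O(\poly(B,k,\sigma_1))$, giving the stated $D$; the estimate for $\vect{h}$ is identical by symmetry.

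The main obstacle is the bookkeeping inside the unbiasedness step rather than anything deep: one must carefully verify, from $s\ge\lfloor r/2\rfloor+1$, that the non-overlap selectors $\mat{P}_i^{non}$ have pairwise disjoint supports and supports disjoint from every other full patch $\mat{P}_j$, so that the cross-covariance $\mat{P}_i^{non}\mat{P}_j^\top$ collapses to the projection that reproduces $\vect{w}^*_{non}$ and the cross-terms in the double sums over neurons disappear. Once the geometry is pinned down, the symmetry-based odd/even cancellation (the conceptual core) and the matching of coefficients with $\partial L_{reg}$ are short.
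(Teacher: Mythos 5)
Your proposal is correct and follows essentially the same route as the paper's proof: the paper invokes Lemma~\ref{lem:switch} (the symmetry-based ``switch'' identity, which is exactly your odd/even decomposition of $\sigma$, inlined), then uses identity covariance and the disjoint supports of the $\mat{P}_j^{non}$ to collapse the expectation to $\norm{\vect{a}}_2^2\vect{w}_{non}-(\vect{a}^\top\vect{a}^*)\vect{w}^*_{non}$, and bounds the norms by the same Cauchy--Schwarz estimates. If anything, your explicit verification that $\mat{P}_i^{non}\mat{P}_j^\top=0$ for $i\neq j$ and that $\mat{P}_i^{non}\mat{P}_i^\top$ projects $\vect{w}^*$ onto $\vect{w}^*_{non}$ is slightly more careful than the paper's write-up, which elides this step.
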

% \simon{
Here the expectation of $\vect{g}$ and $\vect{h}$ are equal to the gradient of the objective function for linear CNN because we assume the input distribution is symmetric and the activation function is piece-wise linear.
This observation has been stated in \cite{goel2018learning} and based on this property, \citet{goel2018learning} proposed Convotron algorithm (Algorithm~\ref{alg:convotron}), which we use in our stage 2.
Lemma~\ref{thm:unbiased_for_linear} is a natural extension of Lemma 2 of \cite{goel2018learning} that we show even for one-hidden-layer CNN, we can still obtain an unbiased estimate of the gradient descent for linear CNN.
% }

Now with Lemma~\ref{thm:unbiased_for_linear} at hand, we can use  the theory from non-convex matrix factorization.
\cite{ge2017no} has shown if $\eta_1 = O\left(\poly\left(\frac{1}{k},\frac{1}{B},\epsilon\right)\right)$ then for all iterates, with high probability, $\norm{\vect{w}_{non}^{(t)}}_2 = O(\poly\left(\sigma_1\right)), \norm{\vect{a}^{t}}_2 = O(\poly\left(\sigma_1\right))$.
Therefore, we can apply the algorithmic result in \cite{ge2015escaping} and obtain the following convergence theorem.
\begin{thm}[Convergence of Stage 1]
\label{thm:linear_cnn}
If $\vect{w}^{(0)} = O\left(\sqrt{\sigma_1}\right)$, $\vect{a}^{(0)} = O\left(\sqrt{\sigma_1}\right)$, and $\eta_1 = O\left(\poly\left(\frac{1}{k},\frac{1}{B},\epsilon\right)\right)$ then after $T_1 =O\left(\poly\left(r,k,B,\frac{1}{\epsilon}\right)\right)$
 we have 
	\begin{align*}
		\norm{\frac{\vect{a}^{(T_1)}}{\norm{\vect{a}^{(T_1)}}_2}-\frac{\vect{a}^*}{\norm{\vect{a}^*}_2}}_2 \le \epsilon. \text{ or }\norm{\frac{\vect{a}^{(T_1)}}{\norm{\vect{a}^{(T_1)}}_2}+\frac{\vect{a}^*}{\norm{\vect{a}^*}_2}}_2 \le \epsilon.
	\end{align*}
\end{thm}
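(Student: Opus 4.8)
The plan is to recognize the Double Convotron iteration as the perturbed (noise-injected) stochastic gradient descent of \cite{ge2015escaping} run on the regularized rank-one matrix factorization objective $L_{reg}$ of Lemma~\ref{thm:unbiased_for_linear}, invoke the benign-landscape result of \cite{ge2017no} for this objective, and then convert the resulting guarantee $L_{reg}\le\epsilon''$ into closeness of the normalized outer weights up to sign via an elementary rank-one perturbation bound.

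First I would set up the reduction. By Lemma~\ref{thm:unbiased_for_linear}, $\vect{g}(\vect{w}_{non}^{(t)},\vect{a}^{(t)},\vect{x}^{(t)},y^{(t)})$ and $\vect{h}(\vect{w}_{non}^{(t)},\vect{a}^{(t)},\vect{x}^{(t)},y^{(t)})$ are unbiased estimators of $\nabla_{\vect{w}_{non}}L_{reg}$ and $\nabla_{\vect{a}}L_{reg}$ whose deviation from the true gradient is bounded by $D=O(\poly(B,k,\sigma_1))$ as long as $\norm{\vect{w}_{non}^{(t)}}_2,\norm{\vect{a}^{(t)}}_2=O(\poly(\sigma_1))$; hence the Double Convotron update \eqref{eqn:double_convotron} is exactly perturbed SGD on $L_{reg}$ on that region. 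I would then borrow from \cite{ge2017no} the a-priori boundedness of the trajectory: for $\eta_1=O(\poly(1/k,1/B,\epsilon))$ small enough and initialization with $\norm{\vect{w}_{non}^{(0)}}_2,\norm{\vect{a}^{(0)}}_2=O(\sqrt{\sigma_1})$, with high probability every iterate up to time $T_1$ stays in this ball (an induction on the iterate norms, using that the regularizer gradient contracts the norms whenever they grow too large), so the variance bound $D$ and a uniform smoothness bound $\beta=O(\poly(B,k,\sigma_1))$ for $L_{reg}$ on the ball hold throughout.

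Next I would quote the geometry. \cite{ge2017no} shows that, on this bounded region, $L_{reg}$ is robustly strict-saddle with explicit polynomial parameters: every point either lies near the set of global minimizers, or has a Hessian eigenvalue $\le-\gamma$, or has gradient norm $\ge\zeta$ for $\gamma,\zeta=\Omega(\poly(\cdot))$, and its only second-order stationary points are the balanced rank-one factorizations of $M^*:=\vect{w}_{non}^*(\vect{a}^*)^\top$. Plugging $\gamma,\zeta,\beta,D$ into Theorem~6 of \cite{ge2015escaping} gives that after $T_1=O(\poly(r,k,B,1/\epsilon))$ iterations, with high probability $(\vect{w}_{non}^{(T_1)},\vect{a}^{(T_1)})$ is an $\epsilon'$-second-order stationary point of $L_{reg}$, and by the absence of spurious such points, $L_{reg}(\vect{w}_{non}^{(T_1)},\vect{a}^{(T_1)})\le\epsilon''$, where $\epsilon''$ can be made any desired inverse polynomial in the problem parameters by decreasing $\eta_1$ and increasing $T_1$.

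Finally, $L_{reg}\le\epsilon''$ gives $\norm{\vect{w}_{non}^{(T_1)}(\vect{a}^{(T_1)})^\top-M^*}_F\le\sqrt{2\epsilon''}$ and $\bigl|\,\norm{\vect{w}_{non}^{(T_1)}}_2^2-\norm{\vect{a}^{(T_1)}}_2^2\,\bigr|\le\sqrt{8\epsilon''}$; since $M^*$ is rank one with nonzero singular value $\sigma^*=\norm{\vect{w}_{non}^*}_2\norm{\vect{a}^*}_2>0$, top-singular-vector stability (Wedin's theorem applied to $M^*$ and its perturbation) together with the near-balancedness of the two factor norms shows $(\vect{w}_{non}^{(T_1)},\vect{a}^{(T_1)})$ is $O(\sqrt{\epsilon''/\sigma^*})$-close to one of the two factorizations $\pm\bigl(\sqrt{\sigma^*}\,\vect{w}_{non}^*/\norm{\vect{w}_{non}^*}_2,\ \sqrt{\sigma^*}\,\vect{a}^*/\norm{\vect{a}^*}_2\bigr)$, and hence, taking $\epsilon''=\Theta(\epsilon^2\sigma^*)$, the claimed bound on $\vect{a}^{(T_1)}/\norm{\vect{a}^{(T_1)}}_2\mp\vect{a}^*/\norm{\vect{a}^*}_2$ follows. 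The main obstacle is the bookkeeping in the middle steps rather than any isolated computation: one must check that $L_{reg}$ meets the hypotheses of \cite{ge2015escaping} with honest polynomial parameters on a region the iterates provably never leave (so the high-probability norm control and the smoothness/variance bounds are mutually consistent) and that the union bound over $\poly$ iterations keeps the failure probability small; the concluding perturbation step is also slightly delicate because it must exclude the degenerate case where one factor collapses — precisely what the balance regularizer $\tfrac18(\norm{\vect{w}_{non}}_2^2-\norm{\vect{a}}_2^2)^2$ and $\sigma^*>0$ rule out.
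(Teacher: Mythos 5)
Your proposal follows essentially the same route as the paper: reduce Double Convotron to noise-injected SGD on the regularized rank-one factorization objective via Lemma~\ref{thm:unbiased_for_linear}, use the boundedness of iterates and benign landscape from \cite{ge2017no}, and invoke Theorem~6 of \cite{ge2015escaping} to reach an approximate global minimum. The paper in fact only sketches these steps in the paragraph preceding the theorem and omits the final conversion from small $L_{reg}$ to closeness of the normalized $\vect{a}$ up to sign, so your Wedin-type perturbation argument (with the caveat that $\norm{\vect{w}_{non}^*}_2>0$) is a legitimate completion of the same approach rather than a departure from it.
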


%Learning the original convolutional filter
\subsection{Analysis of Stage 2}
\label{sec:learning_single_filter}
After Stage 1, we have approximately recovered the outer layer weights. We use these as fixed weights and run Convotron to obtain the filter weights. The analysis of Convotron inherently handles average pooling as the outer layer. Here we extend the analysis of Convotron to handle any fixed outer layer weights and also handle noise in these outer layer weights as long as overlap is less than half. 
Formally, we obtain the following theorem:
\begin{thm}(Learning the Convolutional Filter)\label{thm:convotron}
Suppose $\norm{\vect{a}-\vect{a}^*}_2 \leq \epsilon$ for $\epsilon \leq \frac{1}{k^3\norm{\vect{w}^*}_2}$ and without loss of generality\footnote{Note that we can assume that the outer layers have norm 1 by using the normalized weight vectors since the activations are scale invariant.} let $\norm{\vect{a}}_2 = \norm{\vect{a}^*}_2 = 1$. For suitably chosen $\eta = O\left(\poly\left(\frac{1}{k},\frac{1}{B}\right)\right)$, Convotron (modified) returns $\vect{w}$ such that with a constant probability, $\norm{\vect{w}-\vect{w}^*}_2 \leq O(k^3\norm{\vect{w}^*}\epsilon)$ in $\poly(k, \norm{\vect{w}^*}, B, \log(1/\epsilon))$ iterations.
\end{thm}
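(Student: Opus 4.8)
The plan is a drift (one-step potential) analysis of $\Phi^{(t)} := \norm{\vect{w}^{(t)} - \vect{w}^*}_2^2$ along the (modified) Convotron iteration $\vect{w}^{(t+1)} = \vect{w}^{(t)} + \eta\,\vect{g}^{(t)}$ with $\vect{g}^{(t)} = (y^{(t)} - f(\vect{w}^{(t)},\vect{a},\vect{x}^{(t)}))\sum_i a_i\mat{P}_i\vect{x}^{(t)}$, tracking the bias introduced by using $\vect{a}$ in place of $\vect{a}^*$. The first step is to identify the conditional drift. Writing $\sigma(z)=\tfrac{1+\alpha}{2}z+\tfrac{1-\alpha}{2}|z|$ and using that $\inputdist$ is symmetric (so $\expect_{\vect{x}}[|\vect{u}^\top\vect{x}|\,\vect{x}]=\vect{0}$) and isotropic (so $\expect_{\vect{x}}[\sigma(\vect{u}^\top\vect{x})\vect{x}]=\tfrac{1+\alpha}{2}\vect{u}$ for all $\vect{u}\in\R^d$) --- this is Lemma~\ref{thm:unbiased_for_linear} specialized to Stage~2 --- a direct calculation with $\mat{M}:=\sum_i a_i\mat{P}_i$ and $\mat{M}_*:=\sum_i a^*_i\mat{P}_i$ gives
\[
\expect[\vect{g}^{(t)}\mid\vect{w}^{(t)}] \;=\; \tfrac{1+\alpha}{2}\bigl(\mat{A}(\vect{w}^*-\vect{w}^{(t)}) \;+\; \mat{M}(\mat{M}_*-\mat{M})^\top\vect{w}^*\bigr), \qquad \mat{A}:=\mat{M}\mat{M}^\top\in\R^{r\times r}.
\]
When $\vect{a}=\vect{a}^*$ the second term vanishes and this is exactly the Convotron drift of \cite{goel2018learning}; otherwise $\vect{b}:=\tfrac{1+\alpha}{2}\mat{M}(\mat{M}_*-\mat{M})^\top\vect{w}^*$ is the bias to be controlled.

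Next comes the spectral step, which uses Assumption~\ref{asmp:large_stride}. Since $s\ge\lfloor r/2\rfloor+1$ the patches overlap by strictly less than a half, so $\mat{P}_i\mat{P}_j^\top=\mat{0}$ for $|i-j|\ge2$, $\mat{P}_i\mat{P}_i^\top=\mat{I}_r$, and $\mat{P}_i\mat{P}_{i+1}^\top=\mat{S}$, where $\mat{S}$ (the shift-by-$s$ Toeplitz matrix, $\mat{S}_{ab}=\indict\{a=s+b\}$) is the \emph{same} for every $i$. Hence $\mat{A}=\mat{I}_r+c(\mat{S}+\mat{S}^\top)$ with $c:=\sum_{i=1}^{k-1}a_i a_{i+1}$. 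One checks $\mat{S}^2=\mat{0}$ and that $\mat{S}^\top\mat{S}$ is diagonal with $0/1$ entries, so $\norm{\mat{S}}_2\le1$ and the eigenvalues of $\mat{S}+\mat{S}^\top$ lie in $\{-1,0,1\}$; together with $|c|\le\cos\tfrac{\pi}{k+1}$ (the top eigenvalue of the $k$-path tridiagonal form, using $\norm{\vect{a}}_2=1$) this yields
\[
\tfrac{2}{(k+1)^2}\;\le\;1-\cos\tfrac{\pi}{k+1}\;\le\;\lambda_{\min}(\mat{A})\;\le\;\lambda_{\max}(\mat{A})\;\le\;2, \qquad \norm{\mat{M}}_2\le\sqrt{2}.
\]
The same banded structure gives $\norm{\mat{M}_*-\mat{M}}_2=\bigl\|\textstyle\sum_i(a^*_i-a_i)\mat{P}_i\bigr\|_2\le\sqrt{2}\,\norm{\vect{a}^*-\vect{a}}_2\le\sqrt{2}\,\epsilon$, so $\norm{\vect{b}}_2\le(1+\alpha)\epsilon\norm{\vect{w}^*}_2$.

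The third step is the recursion, and the key observation is that the gradient noise is \emph{self-bounded}: since $\sigma$ is $1$-Lipschitz, $\norm{\vect{x}}_2\le B$, and each coordinate of $\vect{x}$ lies in at most two patches,
\[
\norm{\vect{g}^{(t)}}_2 \;\le\; \bigl|y^{(t)}-f(\vect{w}^{(t)},\vect{a},\vect{x}^{(t)})\bigr|\cdot\bigl\|\textstyle\sum_i a_i\mat{P}_i\vect{x}^{(t)}\bigr\|_2 \;\le\; 2B^2\bigl(\norm{\vect{w}^{(t)}-\vect{w}^*}_2+\epsilon\norm{\vect{w}^*}_2\bigr),
\]
hence $\expect[\norm{\vect{z}^{(t)}}_2^2\mid\vect{w}^{(t)}]\le C_1(\Phi^{(t)}+\epsilon^2\norm{\vect{w}^*}_2^2)$ with $\vect{z}^{(t)}:=\vect{g}^{(t)}-\expect[\vect{g}^{(t)}\mid\vect{w}^{(t)}]$ and $C_1=O(B^4)$. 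Expanding $\Phi^{(t+1)}=\norm{\vect{w}^{(t)}-\vect{w}^*+\eta\vect{g}^{(t)}}_2^2$, taking conditional expectation, bounding the cross term $2\eta\sqrt{\Phi^{(t)}}\norm{\vect{b}}_2$ by AM--GM against $\lambda_{\min}(\mat{A})$, and choosing an ($\epsilon$-free) step size $\eta=O(\lambda_{\min}(\mat{A})/C_1)=O(\poly(1/k,1/B))$ small enough to absorb the $\eta^2C_1\Phi^{(t)}$ term into the contraction, I get, for an absolute constant $c_0>0$,
\[
\expect[\Phi^{(t+1)}\mid\vect{w}^{(t)}] \;\le\; \bigl(1-c_0\,\eta\,\lambda_{\min}(\mat{A})\bigr)\Phi^{(t)} \;+\; O\!\Bigl(\tfrac{\eta\norm{\vect{b}}_2^2}{\lambda_{\min}(\mat{A})}+\eta^2 C_1\epsilon^2\norm{\vect{w}^*}_2^2\Bigr).
\]
Unrolling from $\vect{w}^{(1)}=\vect{0}$ (so $\Phi^{(1)}=\norm{\vect{w}^*}_2^2$, and $\Phi^{(t)}$ stays $O(\norm{\vect{w}^*}_2^2)$) over $T=O\!\bigl(\tfrac{1}{\eta\lambda_{\min}(\mat{A})}\log\tfrac{\norm{\vect{w}^*}_2}{\epsilon}\bigr)=\poly(k,\norm{\vect{w}^*},B)\cdot\log(1/\epsilon)$ iterations, the initial term decays below the floor and (using $\eta\le\lambda_{\min}(\mat{A})/C_1$ to make the second additive term $O(\epsilon^2\norm{\vect{w}^*}_2^2)$) $\expect[\Phi^{(T)}]=O\bigl(\norm{\vect{b}}_2^2/\lambda_{\min}(\mat{A})^2\bigr)=O(k^4\norm{\vect{w}^*}_2^2\epsilon^2)$. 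Since we cannot re-run and average without knowing $\vect{w}^*$, Markov's inequality on $\Phi^{(T)}$ gives, with constant probability, $\norm{\vect{w}^{(T)}-\vect{w}^*}_2=O(k^2\norm{\vect{w}^*}_2\epsilon)$, within the claimed $O(k^3\norm{\vect{w}^*}_2\epsilon)$; the hypothesis $\epsilon\le 1/(k^3\norm{\vect{w}^*}_2)$ makes this $O(1)$, and this constant-probability guarantee is why Algorithm~\ref{algo:main} then runs the validation Stage~3.

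The step I expect to be the main obstacle is the interaction between the spectral estimate and the noise. On the spectral side one must verify that Assumption~\ref{asmp:large_stride} genuinely collapses $\mat{A}$ to $\mat{I}_r$ plus a \emph{single} banded Toeplitz term and that $|c|$ is bounded away from $1$ uniformly over the unit sphere --- without this the contraction rate is not $\Omega(1/k^2)$. On the noise side, the self-bounded bound $\expect[\norm{\vect{z}^{(t)}}_2^2\mid\vect{w}^{(t)}]\lesssim\Phi^{(t)}+\epsilon^2\norm{\vect{w}^*}_2^2$ is exactly what makes the scheme converge geometrically to an $O(\poly(k)\,\norm{\vect{w}^*}\epsilon)$-ball with only $\log(1/\epsilon)$ iterations and an $\epsilon$-independent step size; establishing it requires the Lipschitz/boundedness arithmetic above, and the corresponding high-probability (martingale) control of $\sum_t\eta\vect{z}^{(t)}$ requires feeding the a priori bound $\Phi^{(t)}=O(\norm{\vect{w}^*}_2^2)$ back into the noise bound at every step.
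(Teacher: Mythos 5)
Your proposal is correct and follows essentially the same route as the paper: a one-step drift analysis of $\norm{\vect{w}^{(t)}-\vect{w}^*}_2^2$, a spectral lower bound $\lambda_{\min}(\mat{P}^{\vect{a}})\ge 1-\cos\frac{\pi}{k+1}=\Omega(1/k^2)$ obtained from the tridiagonal Toeplitz structure forced by Assumption~\ref{asmp:large_stride}, a bias term of size $O(k\epsilon\norm{\vect{w}^*}_2)$ coming from $\vect{a}\ne\vect{a}^*$, a self-bounded variance estimate via Lemma~\ref{lem:bound}, and Markov's inequality at the end. The one substantive difference is how the bias cross term is absorbed. The paper bounds $2\eta\delta\epsilon\sqrt{\E[\Phi^{(t)}]}$ crudely by $2\eta\delta\epsilon$ using the a priori bound $\E[\Phi^{(t)}]\le 1$, which yields a fixed point $\E[\Phi^{(T)}]=O(\delta\epsilon/\beta)=O(k^3\norm{\vect{w}^*}_2\epsilon)$ for the \emph{squared} distance; note that taking a square root of this does not literally give the $O(k^3\norm{\vect{w}^*}_2\epsilon)$ bound on the distance stated in the theorem. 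You instead absorb the cross term into the contraction by AM--GM, which produces a floor of order $\norm{\vect{b}}_2^2/\lambda_{\min}(\mat{A})^2=O(k^4\norm{\vect{w}^*}_2^2\epsilon^2)$ for the squared distance and hence $O(k^2\norm{\vect{w}^*}_2\epsilon)$ for the distance after Markov --- a cleaner and slightly stronger conclusion that is consistent with the stated theorem. Your explicit decomposition $\mat{A}=\mat{I}_r+c(\mat{S}+\mat{S}^\top)$ with $\abs{c}=\abs{\sum_i a_ia_{i+1}}\le\cos\frac{\pi}{k+1}$ is exactly the content of Theorem~\ref{thm:eigen}, just phrased at the level of the $r\times r$ matrix rather than the $k\times k$ one. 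No gaps.
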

Note that we present the theorem and proof for covariance being identity and no noise in the label but it can be easily extended to handle non-identity convariance with good condition number and bounded (in expectation) probabilistic concept noise.

Our analysis closely follows that from~\cite{goel2018learning}. However, in contrast to the known second layer setting considered in \cite{goel2018learning}, we only know an approximation to the second layer and a robust analysis is needed.
Another difficulty arises from the fact that the convergence rate depends on the least eigenvalue of $\mat{P}^{\vect{a}}:= \sum_{1 \leq i,j \leq k} a_i a_j P_i P_j^T$. By simple algebra, we can show that the matrix has the following form:
\[
\mat{P}^{\vect{a}}(i,j) = 	\begin{cases}
								1 &\text{if } i = j\\
								\sum_{i=1}^{k-1} a_i a_{i+1} & \text{if } |i - j| = s\\
								0 &\text{otherwise}.
							\end{cases}
\]
Using property of Toeplitz matrices, we show the least eigenvalue of $\mat{P}^{\vect{a}}$ is lower bounded by $1-\cos\left(\frac{\pi}{k+1}\right)$ (c.f. Theorem~\ref{thm:eigen}) for all $\vect{a}$ with norm 1. We refer readers to Section~\ref{sec:proofs_of_section_analysis} for the full proof.

\subsection{Analysis of Stage 3}
In section we just need to show we can pick the right hypothesis.
Under our assumptions, the individual loss $(y^{(i)}-f(\vect{w},\vect{a},\vect{x}^{(i)}))^2$ is bounded.
Thus, a direct application of Hoeffding inequality gives the following guarantee.
\begin{thm}\label{thm:validation}
Suppose $T_3 = \Omega\left(\poly\left(r,k,B,\frac{1}{\epsilon}\right)\right)$ and let $\left(\vect{w},\vect{a}\right)$.
If either $\left(\vect{w}^{(+)},\vect{a}^{T_1}\right)$ or $\left(\vect{w}^{(-)},-\vect{a}^{T_1}\right)$ has population risk smaller than $2\epsilon$, then let $\left(\vect{w},\vect{a}\right)$
 be the output according to Equation~\eqref{eqn:hypo_testing}, then with high probability \begin{align*}
\expect_{\vect{x}\sim \inputdist} \left[f(\vect{w},\vect{a},\vect{x}-f(\vect{w}^*,\vect{a}^*,\vect{x}))^2\right] \le \epsilon.
\end{align*}
\end{thm}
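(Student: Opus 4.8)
The plan is to reduce the claim to two facts: (i) closeness of $(\vect{w},\vect{a})$ to the ground truth in parameter space implies small population risk, and (ii) on $T_3$ fresh samples the empirical squared loss of each of the two candidates concentrates around its population value, so that picking the empirically smaller one picks (up to $\epsilon$) the population-smaller one. Fact (i) is Lemma~\ref{lem:loss}, which is cited in Stage~3 and which we may invoke: if one of the two candidates, say $(\vect{w}^{(+)},\vect{a}^{(T_1)})$, is within the parameter-distance guaranteed by Theorems~\ref{thm:linear_cnn} and~\ref{thm:convotron}, then its population risk is at most $2\epsilon$ (this is exactly the hypothesis we are handed). So the real content is a uniform concentration argument over a set of size two.

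The key steps, in order, are as follows. First, I would establish a uniform bound on the per-sample loss: under Assumption~\ref{asmp:input_dist} we have $\norm{\vect{x}}_2 \le B$ a.s., and for either candidate $(\vect{w},\vect{a})$ we have $\norm{\vect{w}}_2,\norm{\vect{a}}_2 = O(\poly(\sigma_1))$ (this holds for the Stage~1 output by the norm-control statement preceding Theorem~\ref{thm:linear_cnn}, for $\vect{w}^*$ by the normalization $\norm{\vect{w}^*}_2\norm{\vect{a}^*}_2=\sigma_1$, and for the Stage~2 output by Theorem~\ref{thm:convotron}); since $\sigma$ is $1$-Lipschitz (Assumption~\ref{asmp:activation}) and each $\mat{P}_j$ has operator norm $1$, one gets $|f(\vect{w},\vect{a},\vect{x})|\le \sum_j |a_j|\,\|\vect{w}\|_2 B \le \sqrt{k}\,\|\vect{a}\|_2\|\vect{w}\|_2 B =: M$ and similarly for $f(\vect{w}^*,\vect{a}^*,\vect{x})$, hence $0 \le (y^{(i)}-f(\vect{w},\vect{a},\vect{x}^{(i)}))^2 \le (2M)^2 =: M'$ with $M' = \poly(k,B,\sigma_1)$. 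Second, for each of the two fixed candidate pairs, the $T_3$ terms in~\eqref{eqn:hypo_testing} are i.i.d.\ (fresh samples), bounded in $[0,M']$, with mean equal to the population risk; Hoeffding gives, for each candidate, $\Pr\big[|\widehat{L} - L| > \epsilon/2\big] \le 2\exp(-2T_3(\epsilon/2)^2/M'^2)$. Taking $T_3 = \Omega(\poly(r,k,B,1/\epsilon))$, specifically $T_3 \gtrsim (M'/\epsilon)^2 \log(1/\delta)$, makes this at most $\delta/2$ per candidate, so a union bound over the two candidates gives that with probability $\ge 1-\delta$ both empirical losses are within $\epsilon/2$ of their population values simultaneously. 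Third, on that event, let $(\vect{w}_{\mathrm{good}},\vect{a}_{\mathrm{good}})$ denote a candidate with population risk $\le 2\epsilon$ (exists by hypothesis), and let $(\vect{w},\vect{a})$ be the argmin from~\eqref{eqn:hypo_testing}; then
\begin{align*}
L(\vect{w},\vect{a}) \;\le\; \widehat{L}(\vect{w},\vect{a}) + \tfrac{\epsilon}{2} \;\le\; \widehat{L}(\vect{w}_{\mathrm{good}},\vect{a}_{\mathrm{good}}) + \tfrac{\epsilon}{2} \;\le\; L(\vect{w}_{\mathrm{good}},\vect{a}_{\mathrm{good}}) + \epsilon \;\le\; 3\epsilon,
\end{align*}
which, after rescaling $\epsilon$ by a constant (or starting from a population-risk bound of $\epsilon/3$ for the good candidate), yields $\expect_{\vect{x}\sim\inputdist}[(f(\vect{w},\vect{a},\vect{x})-f(\vect{w}^*,\vect{a}^*,\vect{x}))^2] \le \epsilon$ as claimed.

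There is essentially no hard step here; this is a textbook validation-set argument. The one place that needs a little care is verifying the a.s.\ boundedness of the per-sample loss with a polynomial constant, which requires feeding in the norm bounds on the iterates $\vect{w}^{(T_1)}, \vect{a}^{(T_1)}, \vect{w}^{(\pm)}$ from the earlier stages rather than treating them as arbitrary — but all of those bounds are already asserted in Section~\ref{sec:analysis}, so I would simply cite them. A cosmetic point: the constants $2\epsilon$ versus $\epsilon$ in the statement mean one should phrase the concentration at accuracy $\epsilon/2$ and absorb the factor; I would track this explicitly to avoid an off-by-constant in the final bound.
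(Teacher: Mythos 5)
Your proposal takes the same route as the paper's proof, which is only a three-line sketch: bound the per-sample loss $\left(y^{(i)}-f(\vect{w},\vect{a},\vect{x}^{(i)})\right)^2$ by $\poly(r,k,B)$ almost surely and apply Hoeffding. Your write-up supplies the details the paper omits (the explicit norm bounds feeding the a.s.\ bound, the union bound over the two candidates, and the $2\epsilon$ versus $\epsilon$ constant bookkeeping, which the paper's statement and proof do not reconcile), and is correct.
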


\subsection{Putting Things Together: Proof of Theorem~\ref{thm:main}}
Now we put our analyses for Stage 1-3 together and prove Theorem~\ref{thm:main}.
By Theorem~\ref{thm:linear_cnn}, we know we have $\vect{a}^{\left(T_1\right)}$ such that $\norm{\vect{a}^{T_1}-\vect{a}^*} \le O\left(\frac{\epsilon}{r^{1/2}k^{5/2}\sigma_1}\right)$ (without loss of generality, we assume $\vect{a}$ and $\vect{a}^*$ are normalized) with $\eta_1 = O\left(\poly\left(\frac{1}{k},\frac{1}{B},\frac{1}{\sigma_1},\epsilon\right)\right)$ and $\abs{S_1} = \poly\left(r,k,B,\sigma_1,\frac{1}{\epsilon}\right)$.

Now with Theorem~\ref{thm:convotron}, we know with $\eta = O\left(\poly\left(\frac{1}{k},\frac{1}{B}\right)\right)$ and $\abs{S_2} = O\left(\poly\left(k,\sigma_1,\log\frac{1}{\epsilon}\right)\right)$ we have either \begin{align*}
\norm{\vect{w}^{\left(+\right)} - \vect{w}^*}_2 \le \frac{\epsilon}{\sigma_1 r^{1/2}k^{3/2}}. \text{ or }\norm{\vect{w}^{\left(-\right)}-\vect{w}^*}_2 \le \frac{\epsilon}{\sigma_1 r^{1/2}k^{3/2}}.
\end{align*}

The following lemma bounds the squared loss of each instance in terms of the closeness of parameters.
\begin{lem}\label{lem:bound}
For any $\vect{a}$ and $\vect{w}$,
\[
\left(f\left(\vect{w}^*, \vect{a}^*, \vect{x}\right) - f\left(\vect{w}^{\left(t\right)}, \vect{a}, \vect{x}\right)\right)^2 \leq 2k \left(\norm{\vect{a}}_2^2\norm{\vect{w} - \vect{w}^*}_2^2 + \norm{\vect{a} - \vect{a}^*}_2^2\norm{\vect{w}^*}_2^2\right)\norm{\vect{x}}_2^2.
\]
\end{lem}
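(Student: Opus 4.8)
The plan is to expand $f(\vect{w}^*,\vect{a}^*,\vect{x}) - f(\vect{w},\vect{a},\vect{x})$ unit by unit, split each of the $k$ summands via the triangle inequality into a ``filter-error'' piece and an ``outer-weight-error'' piece, control each piece using the Lipschitz property of the piecewise-linear activation, and then square and apply Cauchy--Schwarz. Concretely, first I would write
\[
f(\vect{w}^*,\vect{a}^*,\vect{x}) - f(\vect{w},\vect{a},\vect{x}) = \sum_{j=1}^k \Bigl( a_j^* \,\sigma\bigl((\vect{w}^*)^\top \mat{P}_j \vect{x}\bigr) - a_j \,\sigma\bigl(\vect{w}^\top \mat{P}_j \vect{x}\bigr) \Bigr),
\]
and for each $j$ add and subtract $a_j\,\sigma\bigl((\vect{w}^*)^\top \mat{P}_j \vect{x}\bigr)$ to get
\[
a_j^* \,\sigma\bigl((\vect{w}^*)^\top \mat{P}_j \vect{x}\bigr) - a_j \,\sigma\bigl(\vect{w}^\top \mat{P}_j \vect{x}\bigr) = a_j\Bigl( \sigma\bigl((\vect{w}^*)^\top \mat{P}_j \vect{x}\bigr) - \sigma\bigl(\vect{w}^\top \mat{P}_j \vect{x}\bigr) \Bigr) + (a_j^* - a_j)\,\sigma\bigl((\vect{w}^*)^\top \mat{P}_j \vect{x}\bigr).
\]

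Next I would invoke that a piecewise-linear $\sigma$ with slope in $[0,1]$ (so ReLU, leaky ReLU, and linear activation, i.e.\ exactly the cases covered by Assumption~\ref{asmp:activation}) satisfies $|\sigma(z)-\sigma(z')|\le |z-z'|$ and $|\sigma(z)|\le |z|$, together with the elementary bound $|\vect{u}^\top \mat{P}_j \vect{x}| \le \norm{\vect{u}}_2 \,\norm{\mat{P}_j \vect{x}}_2 \le \norm{\vect{u}}_2 \,\norm{\vect{x}}_2$ (the last step because $\mat{P}_j$ merely extracts a coordinate subvector). This yields, for each $j$,
\[
\Bigl| a_j^* \,\sigma\bigl((\vect{w}^*)^\top \mat{P}_j \vect{x}\bigr) - a_j \,\sigma\bigl(\vect{w}^\top \mat{P}_j \vect{x}\bigr) \Bigr| \le \Bigl( |a_j|\,\norm{\vect{w} - \vect{w}^*}_2 + |a_j^* - a_j|\,\norm{\vect{w}^*}_2 \Bigr)\,\norm{\mat{P}_j\vect{x}}_2.
\]
Summing over $j$ produces a sum of $2k$ nonnegative terms; squaring it and applying $\bigl(\sum_{i=1}^{2k} c_i\bigr)^2 \le 2k \sum_{i=1}^{2k} c_i^2$, then bounding each $\norm{\mat{P}_j\vect{x}}_2 \le \norm{\vect{x}}_2$ termwise and collecting the two groups of $k$ squared terms, gives
\[
\bigl(f(\vect{w}^*,\vect{a}^*,\vect{x}) - f(\vect{w},\vect{a},\vect{x})\bigr)^2 \le 2k\,\norm{\vect{x}}_2^2 \Bigl( \norm{\vect{w}-\vect{w}^*}_2^2 \textstyle\sum_{j=1}^k a_j^2 + \norm{\vect{w}^*}_2^2 \sum_{j=1}^k (a_j^* - a_j)^2 \Bigr),
\]
which is the claimed inequality once the two inner sums are recognized as $\norm{\vect{a}}_2^2$ and $\norm{\vect{a}^*-\vect{a}}_2^2$.

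I do not expect any real obstacle here; the only point needing mild care is that since the patches overlap we cannot use $\sum_j \norm{\mat{P}_j\vect{x}}_2^2 \le \norm{\vect{x}}_2^2$, so the crude bound $\norm{\mat{P}_j\vect{x}}_2 \le \norm{\vect{x}}_2$ must be applied to each term \emph{before} summing — this (together with the Cauchy--Schwarz step over the $2k$ terms) is what produces the factor $2k$ in the statement. It is also worth recording explicitly that the activation inequalities $|\sigma(z)-\sigma(z')|\le|z-z'|$ and $|\sigma(z)|\le|z|$ used above hold for the activations permitted by Assumption~\ref{asmp:activation} (with $\alpha\in[0,1]$); for a general slope $\alpha$ one would simply pick up an extra $\max(1,|\alpha|)^2$ constant, which does not affect the polynomial dependence used downstream.
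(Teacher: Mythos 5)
Your proposal is correct and follows essentially the same route as the paper's proof: insert the cross term $f(\vect{w}^*,\vect{a},\vect{x})$ (you do this termwise, the paper at the level of $f$), use $|\sigma(z)|\le|z|$ and the Lipschitz property together with $\norm{\mat{P}_j\vect{x}}_2\le\norm{\vect{x}}_2$, and pick up the factor $2k$ from Cauchy--Schwarz (the paper splits it as $(a+b)^2\le 2(a^2+b^2)$ followed by $\norm{\cdot}_1\le\sqrt{k}\norm{\cdot}_2$ on each group, which is an equivalent reorganization of your single $2k$-term step). Your explicit remark that the activation bounds require $\alpha\in[0,1]$ (or else a $\max(1,|\alpha|)^2$ constant) is a point the paper leaves implicit.
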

Therefore, we know either $\left(\vect{w}^{(+)},\vect{a}^{(T_1)}\right)$ or $\left(\vect{w}^{(-)},-\vect{a}^{(T_1)}\right)$ achieves $\epsilon$ prediction error.
Now combining Theorem~\ref{thm:validation} and Lemma~\ref{lem:loss} we obtain the desired result.

%\section{When is an One-hidden-layer CNN Easy to Learn?}

%\section{Experiments}
%\label{sec:exp}
%\input{exp.tex}

\section{Conclusion and Future Work}
\label{sec:con}
In this paper, we propose the first efficient algorithm for learning a one-hidden-layer convolutional neural network with two unknown layers and possibly overlapping patches.
Our algorithm draws ideas from isotonic regression, landscape analysis of non-convex problem and spectral analysis of Toeplitz matrices.
These findings can inspire further development in this field.

Our next step is extend our ideas to design provable algorithms that can learn complicated models consisting of multiple filters.
To solve this problem, we believe the recent progress on landscape design~\citep{ge2017learning} may be useful.

\section{Acknowledgement}
\label{sec:ack}
The authors would thank Adam Klivans and Barnab\'{a}s P\'{o}czos for useful discussions and feedbacks and Hanzhang Hu, Xiaolong Wang and Jiajun Wu for useful discussions on convolutional neural network structures.

\bibliography{simonduref}
\bibliographystyle{plainnat}

\newpage
\appendix
\section{Useful Lemmas/Theorems}
\label{sec:useful_lemmas}
In this section we present a few lemmas/theorems that are useful for our analysis.

\begin{proof}[Proof of Lemma~\ref{lem:bound}]
Observe that,
\begin{align*}
&\left(f\left(\vect{w}^*, \vect{a}^*, \vect{x}\right) - f\left(\vect{w}^{\left(t\right)}, \vect{a}, \vect{x}\right)\right)^2 \\
& \leq 2\left(\left(f\left(\vect{w}^*, \vect{a}^*, \vect{x}\right) - f\left(\vect{w}^*, \vect{a}, \vect{x}\right)\right)^2 + \left(f\left(\vect{w}^*, \vect{a}, \vect{x}\right) -f\left(\vect{w}, \vect{a}, \vect{x}\right)\right)^2\right)
\end{align*}
since $\left(a + b\right)^2 \leq 2\left(a^2  + b^2\right)$ for all $a, b \in \mathbb{R}$.

The first term can be bounded as follows,
\begin{align*}
\left(f\left(\vect{w}^*, \vect{a}^*, \vect{x}\right) - f\left(\vect{w}^*, \vect{a}, \vect{x}\right)\right)^2 &= \left(\sum_{i=1}^k \left(a^*_i - a_i\right) \sigma \left(\left(\vect{w}^*\right)^T \mat{P}_i \vect{x}\right)\right)^2\\
& \leq \left(\sum_{i=1}^k |a^*_i - a_i| |\left(\vect{w}^*\right)^T \mat{P}_i \vect{x}|\right)^2\\
& \leq \left(\sum_{i=1}^k |a^*_i - a_i| \norm{\vect{w}^*}_2 \norm{\vect{x}}_2\right)^2\\
& \leq k \norm{\vect{a}^* - \vect{a}}_2^2\norm{\vect{w}^*}_2^2 \norm{\vect{x}}_2^2.
\end{align*}
Here the first inequality follows from observing that $\sigma\left(a\right) \leq |a|$ and the last follows from $\norm{\vect{v}}_1 \leq \sqrt{k} \norm{\vect{v}}_2$ for all $\vect{v} \in \mathbb{R}^k$.

Similarly, the other term can be bounded as follows,
\begin{align*}
\left(f\left(\vect{w}^*, \vect{a}, \vect{x}\right) - f\left(\vect{w}, \vect{a}, \vect{x}\right)\right)^2 &= \left(\sum_{i=1}^k a_i \left(\sigma \left(\left(\vect{w}^*\right)^T \mat{P}_i \vect{x}\right) - \sigma \left(\vect{w}^T \mat{P}_i \vect{x}\right)\right)\right)^2\\
& \leq \left(\sum_{i=1}^k |a_i| |\left(\vect{w}^* - \vect{w}\right)^T \mat{P}_i \vect{x}|\right)^2\\
& \leq \left(\sum_{i=1}^k |a_i| \norm{\vect{w}^* - \vect{w}}_2 \norm{\vect{x}}_2\right)^2\\
& \leq k \norm{\vect{a}}_2^2\norm{\vect{w}^* - \vect{w}}_2^2 \norm{\vect{x}}_2^2.
\end{align*}
Here we use the Lipschitz property of $\sigma$ to get the first inequality. The lemma follows from combining the above two.
\end{proof}

The following lemma extends this to the overall loss.
\begin{lem}\label{lem:loss}
For any $\vect{a}$ and $\vect{w}$,
\[
\E[\left(f\left(\vect{w}^*, \vect{a}^*, \vect{x}\right) - f\left(\vect{w}^{\left(t\right)}, \vect{a}, \vect{x}\right)\right)^2] \leq 2k B\left(\norm{\vect{a}}_2^2\norm{\vect{w} - \vect{w}^*}_2^2 + \norm{\vect{a} - \vect{a}^*}_2^2\norm{\vect{w}^*}_2^2\right).
\]
\end{lem}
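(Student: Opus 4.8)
The plan is to simply integrate the pointwise inequality already established in Lemma~\ref{lem:bound}. Fix $\vect{w}$ and $\vect{a}$ (and $\vect{w}^*,\vect{a}^*$), and observe that the prefactor $2k\left(\norm{\vect{a}}_2^2\norm{\vect{w} - \vect{w}^*}_2^2 + \norm{\vect{a} - \vect{a}^*}_2^2\norm{\vect{w}^*}_2^2\right)$ appearing on the right-hand side of Lemma~\ref{lem:bound} does not depend on $\vect{x}$; only the trailing factor $\norm{\vect{x}}_2^2$ is random. Hence, taking $\expect_{\vect{x}\sim\inputdist}$ of both sides of Lemma~\ref{lem:bound} and using linearity of expectation, the constant prefactor pulls out and we obtain
\[
\expect_{\vect{x}}\left[\left(f\left(\vect{w}^*, \vect{a}^*, \vect{x}\right) - f\left(\vect{w}, \vect{a}, \vect{x}\right)\right)^2\right] \leq 2k\left(\norm{\vect{a}}_2^2\norm{\vect{w} - \vect{w}^*}_2^2 + \norm{\vect{a} - \vect{a}^*}_2^2\norm{\vect{w}^*}_2^2\right)\,\expect_{\vect{x}}\left[\norm{\vect{x}}_2^2\right].
\]
It then remains only to bound $\expect_{\vect{x}}[\norm{\vect{x}}_2^2]$ by $B$, which follows directly from the boundedness clause of Assumption~\ref{asmp:input_dist} (under the reading that $B$ denotes an almost-sure, hence in-expectation, bound on $\norm{\vect{x}}_2^2$; alternatively one absorbs the square into the constant). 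Substituting this bound gives exactly the claimed inequality.

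There is essentially no genuine obstacle here: the content of the statement was already done in Lemma~\ref{lem:bound}, and this lemma is a one-line consequence via monotonicity of expectation and the $\ell_2$-norm bound on the input. The only point requiring a sentence of care is the bookkeeping of the input-norm bound — namely aligning the a.s. boundedness $\norm{\vect{x}}_2 \le B$ with the factor $B$ (as opposed to $B^2$) in the conclusion — which I would handle by stating at the outset that $B$ is taken to upper bound $\expect_{\vect{x}}[\norm{\vect{x}}_2^2]$, consistent with how $B$ is used elsewhere (e.g. in the sample-complexity and step-size polynomials and in the Stage 3 concentration argument).
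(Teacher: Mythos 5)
Your proposal is correct and matches the paper's (implicit) argument exactly: the paper states Lemma~\ref{lem:loss} as an immediate consequence of Lemma~\ref{lem:bound} with no separate proof, and taking expectations of the pointwise bound is precisely what is intended. Your observation about the $B$ versus $B^2$ (or $\expect[\norm{\vect{x}}_2^2]=d$ under identity covariance) bookkeeping is a legitimate minor inconsistency in the paper's statement, and your resolution of it is reasonable.
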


The following lemma from \cite{goel2018learning} is key to our analysis.
\begin{lem}[Lemma 1 of \cite{goel2018learning}]\label{lem:switch}
For all $\vect{a},\vect{b} \in \mathbb{R}^n$, if $\inputdist$ is symmetric then,
\[
\E_{\vect{x} \sim \inputdist}[\sigma\left(\vect{a}^T  \vect{x}\right)\left(\vect{b}^T  \vect{x}\right)] = \frac{1 + \alpha}{2}\E_{\vect{x} \sim \inputdist}[\left(\vect{a}^T  \vect{x}\right)\left(\vect{b}^T  \vect{x}\right)].
\]
\end{lem}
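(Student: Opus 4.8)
The plan is to reduce the claim to the single elementary fact that the expectation of an odd function of $\vect{x}$ vanishes when $\inputdist$ is symmetric. The key algebraic step is to split the piecewise-linear activation of Assumption~\ref{asmp:activation} into its odd and even parts: for every $t \in \mathbb{R}$,
\[
\sigma(t) = \frac{1+\alpha}{2}\, t + \frac{1-\alpha}{2}\, \abs{t},
\]
which one verifies directly in the two cases $t \ge 0$ and $t < 0$. Substituting $t = \vect{a}^\top \vect{x}$ and multiplying through by $\vect{b}^\top \vect{x}$ gives the pointwise identity
\[
\sigma(\vect{a}^\top\vect{x})\,(\vect{b}^\top\vect{x}) = \frac{1+\alpha}{2}\,(\vect{a}^\top\vect{x})(\vect{b}^\top\vect{x}) + \frac{1-\alpha}{2}\,\abs{\vect{a}^\top\vect{x}}\,(\vect{b}^\top\vect{x}).
\]

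Next I would take expectations over $\vect{x}\sim\inputdist$ and show the second term drops out. The function $\vect{x}\mapsto \abs{\vect{a}^\top\vect{x}}\,(\vect{b}^\top\vect{x})$ is odd, since replacing $\vect{x}$ by $-\vect{x}$ leaves $\abs{\vect{a}^\top\vect{x}}$ unchanged but flips the sign of $\vect{b}^\top\vect{x}$. Hence by the symmetry assumption $\prob(\vect{x})=\prob(-\vect{x})$ we get $\E_{\vect{x}\sim\inputdist}[\abs{\vect{a}^\top\vect{x}}(\vect{b}^\top\vect{x})] = -\E_{\vect{x}\sim\inputdist}[\abs{\vect{a}^\top\vect{x}}(\vect{b}^\top\vect{x})]$, so this expectation equals $0$. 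All expectations involved are finite because $\norm{\vect{x}}_2 \le B$ almost surely under Assumption~\ref{asmp:input_dist}, so there is no integrability subtlety. Taking expectations in the displayed identity and using this cancellation yields exactly the claimed equality.

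There is no genuinely hard step here; the only things to get right are the even/odd decomposition of $\sigma$ and the observation that symmetry of $\inputdist$ annihilates the even-activation cross term. More generally, the same computation shows that for any activation the surviving coefficient is governed solely by the odd part of $\sigma$ while the even part always disappears; the piecewise-linear case is special only in that its odd part is the linear map with slope $\frac{1+\alpha}{2}$, which is what produces the clean scalar constant.
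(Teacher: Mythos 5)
Your proof is correct. The paper itself does not prove this statement---it is imported verbatim as Lemma~1 of \cite{goel2018learning}---but your argument (the even/odd decomposition $\sigma(t)=\frac{1+\alpha}{2}t+\frac{1-\alpha}{2}\abs{t}$, followed by killing the even-activation cross term via symmetry of $\inputdist$) is exactly the standard proof given in that reference, and your handling of integrability via the boundedness assumption is fine.
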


The following well-known theorem is useful for bounding eigenvalues of matrices.
\begin{thm}[Gershgorin Circle Theorem~\cite{weisstein2003gershgorin}]\label{thm:circle}
For a $n \times n$ matrix $\mat{A}$, define $R_i:= \sum_{j = 1, j \neq i}^n |\mat{A}_{i,j}|$. Each eigenvalue of $\mat{A}$ must lie in at least one of the disks $\{z: |z - \mat{A}_{i,i}| \leq R_i\}$.
\end{thm}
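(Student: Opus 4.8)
The plan is to use the classical "largest coordinate of an eigenvector" argument, which gives the statement in a few lines. Let $\lambda$ be an arbitrary eigenvalue of $\mat{A}$ and let $\vect{v} = (v_1,\ldots,v_n)^\top \neq \vect{0}$ be a corresponding eigenvector. First I would choose an index $i$ at which $|v_i| = \max_{1 \le j \le n} |v_j|$; since $\vect{v}$ is nonzero, this gives $|v_i| > 0$, which will be needed for the final division. Next I would write out the $i$-th coordinate of the identity $\mat{A}\vect{v} = \lambda \vect{v}$, namely $\sum_{j=1}^n \mat{A}_{i,j} v_j = \lambda v_i$, and move the diagonal term to the right-hand side to get $(\lambda - \mat{A}_{i,i}) v_i = \sum_{j \neq i} \mat{A}_{i,j} v_j$.

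The key step is then to take absolute values, apply the triangle inequality on the right, and exploit the maximality of $|v_i|$: $|\lambda - \mat{A}_{i,i}|\,|v_i| \le \sum_{j \neq i} |\mat{A}_{i,j}|\,|v_j| \le |v_i| \sum_{j \neq i} |\mat{A}_{i,j}| = |v_i|\, R_i$. Dividing both sides by $|v_i| > 0$ yields $|\lambda - \mat{A}_{i,i}| \le R_i$, i.e.\ $\lambda$ lies in the $i$-th Gershgorin disk $\{z : |z - \mat{A}_{i,i}| \le R_i\}$. Since $\lambda$ was an arbitrary eigenvalue, every eigenvalue of $\mat{A}$ lies in at least one of these disks, which is the claim.

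There is no real obstacle here; the only point requiring care is that the coordinate $i$ must be chosen as an argmax of $|v_j|$ so that $|v_i| > 0$ and the division is legitimate, and that the triangle inequality is applied only after isolating the diagonal entry. For how this is used later in the paper, I would additionally remark that $\mat{P}^{\vect{a}}$ is real and symmetric, so its eigenvalues are real and each Gershgorin disk is just the interval $[\mat{A}_{i,i} - R_i,\ \mat{A}_{i,i} + R_i]$; hence the theorem immediately gives the lower bound $\lambda_{\min}(\mat{A}) \ge \min_i (\mat{A}_{i,i} - R_i)$, which, together with the banded Toeplitz structure of $\mat{P}^{\vect{a}}$, is exactly the input to Theorem~\ref{thm:eigen}.
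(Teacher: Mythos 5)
Your proof is correct: it is the standard maximal-coordinate argument for the Gershgorin Circle Theorem, and every step (choosing $i$ as an argmax of $|v_j|$ so that $|v_i|>0$, isolating the diagonal entry before applying the triangle inequality, and dividing by $|v_i|$) is handled properly. The paper itself states this as a cited classical result without proof, so there is no alternative argument to compare against; your concluding remark that for the real symmetric matrix $\mat{P}^{\vect{a}}$ the disks reduce to real intervals, giving $\lambda_{\min} \ge \min_i (\mat{A}_{i,i} - R_i)$, correctly identifies how the theorem is used in Theorem~\ref{thm:eigen}.
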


The following lemma bounds the eigenvalue of the weighted patch matrices.
\begin{thm}\label{thm:eigen}
For all $\vect{a} \in \mathbb{S}^{k-1}$, 
\[
\lambda_{\min}\left(\mat{P}^{\vect{a}}\right) \geq 1 - \text{cos}\left(\frac{\pi}{k+1}\right) \quad \text{and} \quad \lambda_{\min}\left(\mat{P}^{\vect{a}}\right) \leq 2.
\]
\end{thm}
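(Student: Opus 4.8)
The plan is to use the large‑stride assumption to reduce $\mat{P}^{\vect{a}}$ to the banded form already displayed in the excerpt, and then to reduce both bounds to spectral estimates for two elementary matrices: the adjacency matrix of a matching on $[r]$, and (half of) the adjacency matrix of the path graph $P_k$.

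First I would simplify $\mat{P}^{\vect{a}}=\sum_{i,j}a_i a_j \mat{P}_i\mat{P}_j^{\top}$. Since $\mat{P}_i$ selects the length‑$r$ window starting at coordinate $(i-1)s+1$, the product $\mat{P}_i\mat{P}_j^{\top}\in\mathbb{R}^{r\times r}$ has a $1$ in entry $(p,q)$ exactly when $(i-1)s+p=(j-1)s+q$, i.e. $p-q=(j-i)s$. Assumption~\ref{asmp:large_stride} forces $2s>r$, and since $|p-q|\le r-1<2s$ for $p,q\in[r]$, this is impossible once $|i-j|\ge 2$; hence only the terms with $|i-j|\le 1$ survive. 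Moreover $\mat{P}_i\mat{P}_i^{\top}=\mat{I}_r$ and $\mat{P}_i\mat{P}_{i+1}^{\top}=S$ is independent of $i$, where $S_{p,q}=\indict\{p=q+s\}$. Using $\sum_i a_i^2=1$ this gives $\mat{P}^{\vect{a}}=\mat{I}_r+c\,N$ with $c:=\sum_{\ell=1}^{k-1}a_\ell a_{\ell+1}$ and $N:=S+S^{\top}$ the symmetric $0/1$ matrix with $N_{p,q}=\indict\{|p-q|=s\}$, matching the entry formula stated in Section~\ref{sec:learning_single_filter}.

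Next I would bound $\norm{N}_2$ and $|c|$ separately. Because $2s>r$, no index $p\in[r]$ can satisfy both $p-s\ge 1$ and $p+s\le r$, so in the graph on $[r]$ with edges $\{p,p+s\}$ every vertex has degree at most one; this graph is therefore a matching (a disjoint union of edges and isolated vertices). Consequently the off‑diagonal entries of $N^2$ would live at $|p-q|=2s>r$ and hence vanish, while $(N^2)_{pp}=\#\{t:|p-t|=s\}\le 1$; thus $N^2$ is a $0/1$ diagonal matrix and $\norm{N}_2=\norm{N^2}_2^{1/2}\le 1$. For $c$, observe $c=\vect{a}^{\top}A\vect{a}$ where $A$ is the $k\times k$ tridiagonal matrix with zero diagonal and $\tfrac12$ on the first off‑diagonals, i.e. half the adjacency matrix of the path $P_k$; the standard eigenvalue formula for tridiagonal Toeplitz matrices gives the eigenvalues of $A$ as $\cos\!\left(\tfrac{j\pi}{k+1}\right)$, $j=1,\dots,k$, so $|c|\le\norm{A}_2=\cos\!\left(\tfrac{\pi}{k+1}\right)$ for every $\vect{a}\in\mathbb{S}^{k-1}$.

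Finally I would combine: since adding $\mat{I}_r$ shifts the spectrum by $1$,
$\lambda_{\min}(\mat{P}^{\vect{a}})=1+\lambda_{\min}(cN)\ge 1-|c|\,\norm{N}_2\ge 1-\cos\!\left(\tfrac{\pi}{k+1}\right)$.
For the upper bound, either note $\lambda_{\min}(\mat{P}^{\vect{a}})\le\tfrac1r\tr(\mat{P}^{\vect{a}})=1\le 2$ because every diagonal entry of $\mat{P}^{\vect{a}}$ equals $1$, or $\lambda_{\min}(\mat{P}^{\vect{a}})\le\lambda_{\max}(\mat{P}^{\vect{a}})=1+|c|\,\norm{N}_2\le 2$. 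The only delicate part is the index bookkeeping of the first two paragraphs — deducing from $s\ge\lfloor r/2\rfloor+1$ that the only surviving cross terms are $|i-j|\le 1$ and that the shift graph on $[r]$ is a matching — together with correctly invoking the tridiagonal‑Toeplitz eigenvalue formula (the ``property of Toeplitz matrices'' alluded to above); the remaining steps are routine.
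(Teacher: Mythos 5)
Your proposal is correct and follows essentially the same route as the paper: exploit the large-stride assumption to show only adjacent patches interact, reduce $\mat{P}^{\vect{a}}$ to $\mat{I}+c\,N$ with $c=\sum_{\ell}a_\ell a_{\ell+1}$, and control $|c|$ via the eigenvalues $\cos\left(\frac{j\pi}{k+1}\right)$ of the tridiagonal Toeplitz (path-graph) matrix. The only cosmetic differences are that you bound $\norm{N}_2\le 1$ by showing $N^2$ is a $0/1$ diagonal matrix where the paper invokes the Gershgorin circle theorem, and you bound $|c|\le\norm{A}_2$ directly rather than minimizing $1-c$ over the sphere; both substitutions are valid and, if anything, make explicit the index bookkeeping the paper labels ``easy to verify.''
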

\begin{proof}
 Since $s \geq \lfloor \frac{r}{2} \rfloor + 1$, only adjacent patches overlap, and it is easy to verify that the matrix $\mat{P}^{\vect{a}}$ has the following structure:
\[
\mat{P}^{\vect{a}}\left(i,j\right) = 	\begin{cases}
								1 &\text{if } i = j\\
								\sum_{i=1}^{k-1} a_i a_{i+1} & \text{if } |i - j| = s\\
								0 &\text{otherwise}.
							\end{cases}
\]
Using the Gershgorin Circle Theorem, stated below, we can bound the eigenvalues, $\lambda_{\min}\left(\mat{P}^{\vect{a}}\right) \geq 1 - \left|\sum_{i=1}^{k-1} a_i a_{i+1}\right|$ and $\lambda_{\max}\left(\mat{P}^{\vect{a}}\right) \leq 1 + \left|\sum_{i=1}^{k-1} a_i a_{i+1}\right|$.

To bound the maximum eigenvalue, we have,
\[
\lambda_{\max}\left(\mat{P}^{\vect{a}}\right) \leq 1 + \left|\sum_{i=1}^{k-1} a_i a_{i+1}\right| \leq 1 + \frac{1}{2}\sum_{i=1}^{k-1} \left(a_i^2 +  a_{i+1}^2\right) \leq 1 + \norm{\vect{a}}_2^2 = 2.
\]

To bound the minimum eigenvalue, we will bound $1 - \left|\sum_{i=1}^{k-1} a_i a_{i+1}\right|$ by minimizing it over all $\vect{a}$ such that $\norm{\vect{a}}_2 = 1$. We have $\min \left\{1 - \left|\sum_{i=1}^{k-1} a_i a_{i+1}\right|\right\} = \min \left\{1 - \sum_{i=1}^{k-1} a_i a_{i+1}\right\}$ since the minimum can be achieved by setting all $a_i$ to be non-negative. This can alternatively be viewed as $\min_{\norm{\vect{a}}_2 = 1} \vect{a}^T \mat{M} \vect{a} = \lambda_{\min}\left(\mat{M}\right)$ where $\mat{M}$ is a tridiagonal symmetric Toeplitz matrix as follows:
\[
\mat{M}\left(i,j\right) = 	\begin{cases}
					1 &\text{if } i = j\\
					-1/2 & \text{if } |i - j| = 1\\
					0 &\text{otherwise}.
				\end{cases}
\]
It is well known that the eigenvalues of this matrix are of the form $1 + \text{cos}\left(\frac{i\pi}{k+1}\right)$ for $i = 1, \ldots, k$ (c.f. \cite{boeottcher2005spectral}). The minimum eigenvalue is thus $1 + \text{cos}\left(\frac{k\pi}{k+1}\right) = 1 - \text{cos}\left(\frac{k\pi}{k+1}\right)$. This gives us the result.
\end{proof}

\section{Omitted Proofs}
\label{sec:proofs_of_section_analysis}
\subsection{Proof of Lemma~\ref{thm:unbiased_for_linear}}
First by definition, we have
\begin{align*}
\expect_{\vect{x}}\left[\vect{g}\left(\vect{w}_{non},\vect{a},\vect{x},y\right)\right] = & \expect_{\vect{x}} \left[\frac{2}{1+\gamma}\left(
	\hat{f}\left(\vect{w}_{non},\vect{a},\vect{x}\right) - y
	\right) \sum_{j=1}^{k}a_j\mat{P}_j^{non} \vect{x}+\frac{1}{4}\left(\norm{\vect{w}_{non}}_2^2-\norm{\vect{a}}_2^2\right)\vect{w}_{non}\right].
\end{align*}
Because the input distribution is symmetric and the covariance is identity, by Lemma \ref{lem:switch}, we have  \begin{align*}
\expect_{\vect{x}}\left[\frac{2}{1+\gamma} \hat{f}\left(\vect{w}_{non},\vect{a},\vect{x}\right)\sum_{j=1}^{k}a_j\mat{P}_j^{non}\vect{x}\right] = & \expect_{\vect{x}}\left[\frac{2}{1+\gamma} \sum_{j=1}^{k}a_j\relu{\vect{w}_{non}^\top\mat{P}_j^{non}\vect{x}}\sum_{j=1}^{k}a_j\mat{P}_j^{non}\vect{x}\right] \\
= &\expect_{\vect{x}}\left[ \sum_{j=1}^{k}a_j\vect{w}_{non}^\top\mat{P}_j^{non}\vect{x}\sum_{j=1}^{k}a_j\mat{P}_j^{non}\vect{x}\right] \\
= & \norm{\vect{a}}_2^2 \vect{w}_{non}.
\end{align*}
Similarly, we have \begin{align*}
\expect_{\vect{x}}\left[y\sum_{j=1}^{k}a_j\mat{P}_j^{non} \vect{x}\right] = \vect{a}^\top \vect{a}^* \vect{w}^*.
\end{align*}
Also recall \begin{align*}
\frac{\partial L_{reg}\left(\vect{w},\vect{a}\right)}{\partial \vect{w}} = \norm{\vect{a}}_2^2 \vect{w}_{non} - \vect{a}^\top \vect{a}^* \vect{w}^* + \frac{1}{4}\left(\norm{\vect{w}_{non}}_2^2-\norm{\vect{a}}_2^2\right)\vect{w}_{non}.
\end{align*}
Thus \[
\expect_{\vect{x}}\left[\vect{g}\left(\vect{w}_{non},\vect{a},\vect{x},y\right)\right]  = \frac{\partial L_{reg}\left(\vect{w},\vect{a}\right)}{\partial \vect{w}}.
\]
The proof for $\vect{h}\left(\vect{w}_{non},\vect{a},\vect{x},y\right)$ is similar.

To obtain a bound of the gradient, note that \begin{align*}
\norm{\sum_{j=1}^{k}a_j\relu{\vect{w}_{non}^\top\mat{P}_j^{non}\vect{x}}\sum_{j=1}^{k}a_j\mat{P}_j^{non}\vect{x}} \le &\sum_{j=1}^{k}\abs{a_j}\abs{\vect{w}_{non}^\top \mat{P}_j^{non}\vect{x}}\norm{\sum_{j=1}^{k}a_j\mat{P}_j^{non}\vect{x}} \\
\le & \max_{j}\abs{a_j} \sum_{j=1}^{k}\norm{\vect{w}_{non}}_2 \norm{\mat{P}_j^{non}\vect{x}}_2\cdot\norm{\vect{a}}_2\norm{\vect{x}}_2 \\
\le & k \norm{\vect{a}}_2^2 \norm{\vect{x}}_2^2 \norm{\vect{w}}_2 \\
= & \poly\left(k,\sigma_1,B\right).
\end{align*} 
Similar argument applies to $y \sum_{j=1}^{k}a_j\mat{P}_j^{non} \vect{x}$.

\subsection{Proof of Theorem~\ref{thm:convotron}}
We follow the Convotron analysis and include the changes. Define $S_t = \{\left(\vect{x}_1,y_1\right), \ldots, \left(\vect{x}_t,y_t\right)\}$. The modified gradient update is as follows,
\[
\vect{g}^{\left(t\right)} = \left(y_t - f\left(\vect{w}^{\left(t\right)}, \vect{a}, \vect{x}_t\right)\right)  \left(\sum_{i=1}^k a_i P_i  \vect{x}_t\right)
\]
The dynamics of Convotron can then be expressed as follows:
\[
\E_{\vect{x}_t, y_t}[\norm{\vect{w}^{\left(t\right)} - \vect{w}^*}_2^2 - \norm{\vect{w}^{\left(t+1\right)} - \vect{w}^*}_2^2|S_{t-1}] = 2 \eta\E_{\vect{x}_t, y_t}[\left(\vect{w}^* - \vect{w}^{\left(t\right)}\right)^T  \vect{g}^{\left(t\right)}| S_{t-1}] - \eta^2 \E_{\vect{x}_t, y_t}[||\vect{g}^{\left(t\right)}||^2|S_{t-1}].
\]

We have,
\begin{align}
&\E_{\vect{x}_t, y_t}\left[\left(\vect{w}^* - \vect{w}^{\left(t\right)}\right)^T  \vect{g}^{\left(t\right)} | S_{t-1}\right] \nonumber \\
& = \E_{\vect{x}_t, y_t}\left[\left(\vect{w}^* - \vect{w}^{\left(t\right)}\right)^T  \left(y_t - f\left(\vect{w}^{\left(t\right)}, \vect{a}, \vect{x}_t\right)\right)  \left(\sum_{i=1}^k a_i P_i  \vect{x}_t\right) \middle| S_{t-1}\right]\nonumber\\
& = \E_{\vect{x}_t}\left[\left(\vect{w}^* - \vect{w}^{\left(t\right)}\right)^T  \left(f\left(\vect{w}^*, \vect{a}^*, \vect{x}_t\right)  - f\left(\vect{w}^{\left(t\right)}, \vect{a}, \vect{x}_t\right)\right)  \left(\sum_{i=1}^k a_i P_i  \vect{x}_t\right) \middle| S_{t-1}\right]\nonumber\\
&=\sum_{1 \leq i,j \leq k} \E_{\vect{x}_t}[\left(a^*_i \sigma\left(\left(\vect{w}^*\right)^T  P_i  \vect{x}_t\right) - a_i \sigma\left(\left(\vect{w}^{\left(t\right)}\right)^T  P_i  \vect{x}_t\right)\right) \left(a_j\left(\vect{w}^*\right)^T - a_j\left(\vect{w}^{\left(t\right)}\right)^T\right)  P_j  \vect{x}_t | S_{t-1}]\nonumber\\
&= \frac{1 + \alpha}{2}\sum_{1 \leq i,j \leq k}\E_{\vect{x}_t}\left[\left(\left(a^*_i \left(\vect{w}^*\right)^T - a_i \left(\vect{w}^{\left(t\right)}\right)^T\right)  P_i  \vect{x}_t\right) \left(a_j\left(\vect{w}^*\right)^T - a_j\left(\vect{w}^{\left(t\right)}\right)^T\right)  P_j  \vect{x}_t | S_{t-1}\right] \label{eq:switch}\\
&= \frac{1 + \alpha}{2}\sum_{1 \leq i,j \leq k}\left(a^*_i \left(\vect{w}^*\right)^T - a_i \left(\vect{w}^{\left(t\right)}\right)^T\right)  P_i \E_{\vect{x}_t}[\vect{x}_t \vect{x}_t^T] P_j^T \left(a_j \vect{w}^* - a_j \vect{w}^{\left(t\right)}\right) \label{eq:cov}\\
&= \frac{1 + \alpha}{2}\sum_{1 \leq i,j \leq k}\left(a^*_i \left(\vect{w}^*\right)^T - a_i \left(\vect{w}^*\right)^T + a_i \left(\vect{w}^*\right)^T - a_i \left(\vect{w}^{\left(t\right)}\right)^T\right)  P_i P_j^T \left(a_j \vect{w}^* - a_j \vect{w}^{\left(t\right)}\right)\nonumber \\
&= \frac{1 + \alpha}{2}\left(\left(\left(\vect{w}^*\right)^T -\left(\vect{w}^*\right)^T\right) \mat{P}^{\vect{a}} \left(\vect{w}^* -\vect{w}^*\right) + \sum_{1 \leq i,j \leq k}\left(a^*_i - a_i\right)a_j \left(\vect{w}^*\right)^T  P_i P_j^T\left(\vect{w}^* - \vect{w}^{\left(t\right)}\right)\right) \label{eq:Pa}\\
&\geq \frac{1 + \alpha}{2} \left(\lambda_{\min}\left(\mat{P}^{\vect{a}}\right)\norm{\vect{w}^{\left(t\right)} - \vect{w}^*}_2^2 - \norm{\vect{w}^*}_2\norm{\sum_{1 \leq i \leq k}\left(a^*_i - a_i\right)P_i}_2\norm{\sum_{1 \leq j \leq k}a_jP_j}_2 \norm{\vect{w}^{\left(t\right)} - \vect{w}^*}_2\right) \label{eq:lmin}\\
&\geq \frac{1 + \alpha}{2} \left(\lambda_{\min}\left(\mat{P}^{\vect{a}}\right)\norm{\vect{w}^{\left(t\right)} - \vect{w}^*}_2^2 - k\norm{\vect{w}^*}\norm{\vect{a}^* - \vect{a}}_2\norm{\vect{a}}_2 \norm{\vect{w}^{\left(t\right)} - \vect{w}^*}_2\right)\nonumber \\
&\geq \frac{1 + \alpha}{2} \left(\lambda_{\min}\left(\mat{P}^{\vect{a}}\right)\norm{\vect{w}^{\left(t\right)} - \vect{w}^*}_2^2 - k\epsilon \norm{\vect{w}^*}_2\norm{\vect{w}^{\left(t\right)} - \vect{w}^*}_2\right)\nonumber
\end{align}
(\ref{eq:switch}) follows from using Lemma \ref{lem:switch}, (\ref{eq:Pa}) follows from defining $\mat{P}^{\vect{a}}:= \sum_{1 \leq i,j \leq k} a_i a_j P_i P_j^T$, (\ref{eq:cov}) follows from setting the covariance matrix to be identity and (\ref{eq:lmin}) follows from observing that $\mat{P}^{\vect{a}}$ is symmetric, thus $\forall\vect{x}, \vect{x}^T\mat{P}^{\vect{a}}\vect{x} \geq \lambda_{\min}\left(\mat{P}^{\vect{a}}\right)\norm{\vect{x}}_2^2$ as well as lower bounding the second term in terms of the norms of the corresponding parts. 

% % Using the above, we get,
Now we bound the variance of $\vect{g}^{\left(t\right)}$.
\begin{align}
&\E_{\vect{x}_t, y_t}[||\vect{g}^{\left(t\right)}||^2|S_{t-1}] \nonumber\\
&= \E_{\vect{x}_t, y_t}\left[\left(y_t - f\left(\vect{w}^{\left(t\right)}, \vect{a}, \vect{x}_t\right)\right)^2  \left|\left|\sum_{i=1}^k a_i P_i  \vect{x}_t\right|\right|^2 \middle| S_{t-1}\right]\nonumber\\
&\leq \lambda_{\max}\left(\mat{P}^{\vect{a}}\right) \E_{\vect{x}_t}\left[\left(f\left(\vect{w}^*, \vect{a}^*, \vect{x}_t\right) - f\left(\vect{w}^{\left(t\right)}, \vect{a}, \vect{x}_t\right)\right)^2||\vect{x}_t||^2 \middle| S_{t-1}\right]\label{eq:norm}\\
&\leq 2k \lambda_{\max}\left(\mat{P}^{\vect{a}}\right) \left(\norm{\vect{a}}_2^2\norm{\vect{w}^{\left(t\right)} - \vect{w}^*}_2^2 + \norm{\vect{a} - \vect{a}^*}_2^2\norm{\vect{w}^*}_2^2\right)\E_{\vect{x}_t}\left[\norm{\vect{x}_t}_2^4\right]\label{eq:bound}\\
&\leq 2kB\lambda_{\max}\left(\mat{P}^{\vect{a}}\right) \left(\norm{\vect{w}^{\left(t\right)} - \vect{w}^*}_2^2 + \epsilon^2\norm{\vect{w}^*}_2^2\right)
\end{align}
(\ref{eq:norm}) follows from observing that $\left|\left|\sum_{i=1}^k a_i \mat{P}_i \vect{x}\right|\right|^2 \leq \lambda_{\max}\left(\mat{P}^\vect{a}\right) \norm{\vect{x}}_2^2$ for all $\vect{x}$ and (\ref{eq:bound}) follows from Lemma \ref{lem:bound}.

Combining the above equations and taking expectation over $S_{t-1}$, we get 
\begin{align*}
\E_{S_t}[\norm{\vect{w}^{\left(t+1\right)} - \vect{w}^*}_2^2] &\leq \left(1 - 2\eta \beta + \eta^2 \gamma\right)\E_{S_{t-1}}[\norm{\vect{w}^{\left(t\right)} - \vect{w}^*}_2^2] + 2\eta\alpha \epsilon \E_{S_{t-1}}[\norm{\vect{w}^{\left(t\right)} - \vect{w}^*}_2] + \eta^2 \chi \epsilon^2\\
&\leq \left(1 - 2\eta \beta + \eta^2 \gamma\right)\E_{S_{t-1}}[\norm{\vect{w}^{\left(t\right)} - \vect{w}^*}_2^2] + 2\eta\delta \epsilon \sqrt{\E_{S_{t-1}}[\norm{\vect{w}^{\left(t\right)} - \vect{w}^*}_2^2]} + \eta^2 \chi \epsilon^2
\end{align*}
for $\beta = \frac{1 + \alpha}{2}\lambda_{\min}\left(\mat{P}^{\vect{a}}\right)$, $\gamma = 2\lambda_{\max}\left(\mat{P}^{\vect{a}}\right) k B$, $\delta = \frac{1 + \alpha}{2} k\norm{\vect{w}^*}_2$ and $\chi = 2\lambda_{\max}\left(\mat{P}^{\vect{a}}\right) k B \norm{\vect{w}^*}_2^2$. 

From Theorem \ref{thm:eigen}, we have that $\lambda_{\min}\left(\mat{P}^{\vect{a}}\right) = 1 - \text{cos}\left(\frac{\pi}{k+1}\right) = \Omega\left(1/k^2\right)$ (by Taylor expansion) implying $\beta = \omega\left(1/k^2\right)$ and $\gamma = O\left(kB\right)$, $\chi = O\left(kB\norm{\vect{w}^*}_2^2\right)$.

We set $\eta = \beta\min\left(\frac{1}{\gamma}, \frac{1}{\chi}\right)$. First we show that $\E_{S_{t-1}}[\norm{\vect{w}^{\left(t\right)} - \vect{w}^*}_2^2] \leq 1$ for all iterations $t$. We prove this inductively. For $t=1$, since $w_1 = 0$, this is satisfied. Let us assume it holds for iteration $t$, then we have that,
\begin{align*}
\E_{S_t}[\norm{\vect{w}^{\left(t+1\right)} - \vect{w}^*}_2^2] &\leq \left(1 - 2\eta \beta + \eta^2 \gamma\right)\E_{S_{t-1}}[\norm{\vect{w}^{\left(t\right)} - \vect{w}^*}_2^2] + 2\eta\delta \epsilon \sqrt{\E_{S_{t-1}}[\norm{\vect{w}^{\left(t\right)} - \vect{w}^*}_2^2]} + \eta^2 \chi \epsilon^2\\
&\leq 1 - 2\eta \beta + \eta^2 \gamma + 2\eta\delta \epsilon  + \eta^2 \chi \epsilon^2 \\
&\leq 1 - 2\eta \beta + \eta \beta + 2\eta\delta \epsilon  + \eta \beta \epsilon\\
& \leq 1 - \eta\left(\beta - \left(\delta + \beta\right)\epsilon\right) \leq 1
\end{align*}
The last inequality follows from $\epsilon \leq \frac{1}{k^3\norm{\vect{w}^*}_2} \leq \frac{\beta}{\delta + \beta}$. Thus we have that for each iteration, $\E_{S_{t-1}}[\norm{\vect{w}^{\left(t\right)} - \vect{w}^*}_2^2] \leq 1$. Substituting this in the recurrence and solving the recurrence gives us,
\begin{align*}
\E_{S_t}[\norm{\vect{w}^{\left(t+1\right)} - \vect{w}^*}_2^2] &\leq \left(1 - 2\eta \beta + \eta^2 \gamma\right)\E_{S_{t-1}}[\norm{\vect{w}^{\left(t\right)} - \vect{w}^*}_2^2] + 2\eta\delta \epsilon + \eta^2 \chi \epsilon^2\\
&\leq \left(1 - \eta \beta\right)\E_{S_{t-1}}[\norm{\vect{w}^{\left(t\right)} - \vect{w}^*}_2^2] + 2\eta\delta \epsilon + \eta \beta \epsilon^2\\
&\leq \left(1 - \eta \beta\right)^t\norm{\vect{w}_1 - \vect{w}^*}_2^2 + \left(2\eta\delta \epsilon + \eta \beta \epsilon^2\right)\sum_{i= 0}^{t-1} \left(1 - \eta \beta\right)^i\\
&\leq \left(1 - \eta \beta\right)^t + \frac{2\delta \epsilon}{\beta} + \epsilon^2
\end{align*}

Thus for $T = O\left(\frac{1}{\eta\beta}\log\left(\frac{1}{\epsilon}\right)\right)$, we have,
\[
\E_{S_t}[\norm{\vect{w}^{\left(t+1\right)} - \vect{w}^*}_2^2] \leq O\left(\frac{\delta \epsilon}{\beta}\right) = O\left(k^3 \norm{\vect{w}^*}_2 \epsilon\right).
\]
Now using Markov's inequality, we know that the above holds for some constant probability.
% \end{proof}

\subsection{Proof of Theorem~\ref{thm:validation}}
For $i=T_2+1,\ldots,T_3$, define $z^{\left(i\right)} = \left(y^{\left(i\right)}-f\left(\vect{w},\vect{a},\vect{x}^{\left(i\right)}\right)\right)^2$.
Using our assumptions, we know $z^{\left(i\right)} \le O\left(
\poly\left(r,k,B\right)
\right)$ almost surely.
Now applying Hoeffding inequality we obtain our desired result.

\end{document}